\newtheorem{theorem}{Theorem}[section]
\begin{document}
	
	\title[Image Segmentation via Variational Model Based Tailored UNet: A Deep Variational Framework]
	{Image Segmentation via Variational Model Based Tailored UNet: A Deep Variational Framework}
	
	\author[K. QI \and Z. HUANG]{Kaili Qi \and Zhongyi Huang$^{*}$}
	\address{
		Department of Mathematical Sciences, Tsinghua University,
		Beijing, 100084, China
	}
	\email{qkl21@mails.tsinghua.edu.cn \and zhongyih@mail.tsinghua.edu.cn}
	\thanks{*Corresponding author}
	\author[W. YANG]{Wenli Yang}
	\address{
		 School of Mathematics, China University of Mining and Technology,
		  Xuzhou, 221116, China
	}
	\email{yangwl19@cumt.edu.cn}
	
	\date{January ?, ???? and, in revised form, March ??, ????.}
	
	\subjclass[2000]{68T01, 68U10, 94A08}

	\abstract{Traditional image segmentation methods, such as variational models based on partial differential equations (PDEs), offer strong mathematical interpretability and precise boundary modeling, but often suffer from sensitivity to parameter settings and high computational costs. In contrast, deep learning models such as UNet, which are relatively lightweight in parameters and unlimited by hardware and computing resources compared with recent deep neural networks, excel in automatic feature extraction but lack theoretical interpretability and require extensive labeled data. To harness the complementary strengths of both paradigms without milking the best possible engineering results, we propose Variational Model Based Tailored UNet (VM\_TUNet), a novel hybrid framework that integrates the fourth-order modified Cahn–Hilliard equation with the deep learning backbone of UNet, which combines the interpretability and edge-preserving properties of variational methods with the adaptive feature learning of neural networks. Specifically, a data-driven operator is introduced to replace manual parameter tuning, and we incorporate the tailored finite point method (TFPM) to enforce high-precision boundary preservation. Experimental results on benchmark datasets demonstrate that VM\_TUNet achieves superior segmentation performance compared to existing approaches, especially for fine boundary delineation.}
	
	\keywords{Image segmentation, deep learning, variational model, UNet.}
	\maketitle
	
\section{Introduction}
Image segmentation is dividing an image into distinct regions or segments to simplify its representation, making it easier to analyze or interpret. This technique is crucial in various fields, such as medical imaging for detecting tumors, autonomous driving for recognizing roads and obstacles, and video surveillance or scene analysis in computer vision. Traditional methods of image segmentation encompass approaches like thresholding, histogram-based bundling, region growing, $k\text{-means}$ clustering, watersheds, active contours, graph cuts, conditional and Markov random fields, and sparsity-based methods \cite{minaee2021image}. Despite their widespread use, traditional models face challenges when dealing with noise, complex backgrounds, and irregular object shapes. They rely on manually engineered features and mathematical models, such as variational techniques which utilize PDEs. 

Using variational energy minimization and PDE-based frameworks has a long history of addressing image segmentation problems and has achieved mickle admirable results \cite{chan2001active,liu2022two,yang2019image,zhu2013image}. Translate segmentation challenges into energy minimization problems, where the goal is to find an optimal segmentation function $u$ that minimizes a designed energy functional as
\begin{equation}\label{energy}
	E(u)=E_{\text{data}}(u;f)+E_{\text{reg}}(u),
\end{equation}
where $u$ represent the segmentation result, which can take one of several forms: a level set function, a region indicator function or a boundary curve, and $f$ denotes the input image. The first term of Eq.~(\ref{energy}) measures how well the segmentation matches the observed image characteristics and the second enforces smoothness constraints to prevent fragmented or irregular segmentation results. By solving this optimization problem, we can obtain a segmentation result that balances the internal similarity and edge smoothness of the image. Variational PDE-based models excel in image segmentation through their data efficiency, inherent noise robustness, computational lightness, interpretable energy minimization framework, topological adaptability, seamless physical prior integration, and unsupervised operation capability which have been proved in many aspects.

Despite their strong mathematical and physical foundations, PDE-based approaches can be challenging to apply effectively due to their sensitivity to initial setups and the requirement for manual parameter calibration. Manual parameter tuning is labor-intensive, and inappropriate settings, especially on noisy or textured images, can easily lead to under-segmentation or over-segmentation \cite{mumford1989optimal}. Like Chan-Vese model and other variants based on it, average intensities need manually set update format \cite{chan2001active,vese2002multiphase,zhang2010active}. With the surge of deep learning, its application in image segmentation has revolutionized the field, enabling unprecedented accuracy and efficiency in tasks such as medical imaging, autonomous driving, and remote sensing. Deep learning models, particularly parameter-efficient convolutional neural networks (CNNs) and architectures like FCN \cite{long2015fully}, UNet \cite{ronneberger2015u}, and so on, excel at automatically extracting hierarchical features from images, which eliminate the need for handcrafted features. This adaptability allows them to handle complex and diverse datasets with ease. Additionally, deep learning methods support end-to-end learning, reducing the reliance on manual parameter tuning and enabling scalable solutions. Their ability to generalize across various domains and achieve state-of-the-art performance makes them a powerful tool for image segmentation, despite challenges such as the need for large labeled datasets and computational resources.

In this study, we present \textbf{VM\_TUNet}, a novel approach that integrates deep learning with variational models for more effective image segmentation. By combining the Cahn-Hilliard equation with the UNet architecture, we create a hybrid model that overcomes the challenges of conventional methods, such as manual parameter tuning and sensitivity to initial conditions. Our approach benefits from the interpretability of variational models and the flexibility and scalability of deep learning, which provides a robust solution to complex image segmentation tasks. This work aims to provide a more adaptable and efficient model for diverse applications, particularly in fields like medical imaging and autonomous driving. Our contributions are threefold:
\begin{itemize}
	\item Integration of deep learning with traditional variational models: \\
	VM\_TUNet model proposed in this paper integrates traditional variational models with deep learning methods based on the UNet architecture, which combines the advantages of these two directions.
	\item Cahn-Hilliard equation to ensure boundary preservation: \\
	VM\_TUNet employs the fourth-order modified Cahn-Hilliard equation to ensure accurate boundary preservation and allow the model to be more adaptable to different types of data in complex scenarios.
	\item Tailored finite point method to compute accurately and effectively:  \\                
	VM\_TUNet utilizes TFPM to compute the {\color{blue} Laplacian }	operator during the segmentation process, which reduces parameter sensitivity and improves computational accuracy and efficiency.
\end{itemize}

	This paper is structured as follows: We introduce the variational model and deep learning model, especially the Cahn-Hilliard equation in Section \ref{Related work}. We propose VM\_TUNet with TFPM and UNet class in Section \ref{Method}, and demonstrate their effectiveness by comprehensive numerical experiments in Section \ref{Experiments}. This paper is concluded in Section \ref{Conclusion}.
	
\section{Related work}\label{Related work}
\subsection{Variational models}\label{chan}
Variational models play a crucial role in image segmentation by leveraging energy minimization and mathematical optimization techniques. Variational approaches to image segmentation commonly involve evolving active contours to precisely delineate the boundaries of objects \cite{caselles1997geodesic,chan2001active,kass1988snakes}. In addition to curve evolution techniques, numerous region-based image segmentation models exist, with the Mumford-Shah variational model being one of the most renowned which has many variants such as Chan-Vese model. Moreover, the use of Euler's elastica is prevalent in the development of variational methods for mathematical imaging \cite{bae2017augmented,shen2003euler,tai2011fast,zhu2013image}. Variational approaches offer transparency and interpretability, which make them suitable for medical image analysis. However, their performance can be highly dependent on initialization and often requires manual parameter tuning, which limits the generalization of these methods and they typically processes only one image at a time, which makes them inefficient for relatively large-scale tasks. 

The Chan-Vese segmentation algorithm is designed to segment objects without clearly defined boundaries. This algorithm is based on level sets that are evolved iteratively to minimize an energy, which is defined by weighted values corresponding to the sum of differences intensity from the average value outside the segmented region, the sum of differences from the average value inside the segmented region, and a term which is dependent on the length of the boundary of the segmented region. This algorithm was first proposed by Tony Chan and Luminita Vese, which  is based on the Mumford–Shah functional and aims to segment an image by finding a contour $C$ that best separates the image into regions of approximately constant intensity. It seeks to minimize the following energy functional
\[
\begin{split}
	E(c_1, c_2, C) &= \mu \cdot \text{Length}(\partial C)\\
	& + \lambda_1 \int_{\text{inside}(C)} |I(x, y) - c_1|^2 \, dxdy 
	+ \lambda_2 \int_{\text{outside}(C)} |I(x, y) - c_2|^2 \, dxdy,
\end{split}
\]
where $I(x,y)$ is the input image; $c_1,c_2$ are average intensities inside and outside the contour; $\mu,\lambda_1,\lambda_2$ are positive weighting parameters; and $\partial C$ is the contour boundary. To handle topology changes, like contour splitting, the contour $C$ is represented implicitly using a level set function $\phi(x,y)$ such that
\begin{equation}\notag
	\begin{cases}
		C=\{(x,y)\in\Omega:\phi(x,y)=0\},\\
		\text{insdie}(C)=\{(x,y)
		\in\Omega:\phi(x,y)>0\},\\
		\text{outside}(C)=\{(x,y)\in\Omega:\phi(x,y)<0\},
	\end{cases}  
\end{equation}
where $\Omega$ is the image domain. Using the Heaviside function $H(\phi)$, the energy can be rewritten in the level set form
\begin{equation}\notag
	\begin{split}
		&E(c_1, c_2, \phi) = \mu \int_{\Omega} \delta(\phi(x, y)) |\nabla \phi(x, y)| \, dxdy \\
		+ 
		&\lambda_1 \int_{\Omega} |I(x, y) - c_1|^2 H(\phi(x, y)) \, dxdy + 
		\lambda_2 \int_{\Omega} |I(x, y) - c_2|^2 \big(1 - H(\phi(x, y))\big) \, dxdy,
	\end{split}
\end{equation}
where $H(\phi)$ is the Heaviside function and $\delta(\phi)$ is the Dirac delta approximation. At each iteration, the optimal values of $c_1$ and $c_2$ (region averages) are 
\[
c_1 = \frac{\int_{\Omega} I(x, y)H(\phi(x, y)) \, dxdy}{\int_{\Omega} H(\phi(x, y)) \, dxdy}
,\quad
c_2 = \frac{\int_{\Omega} I(x, y)\big(1 - H(\phi(x, y))\big) \, dxdy}{\int_{\Omega} (1 - H(\phi(x, y))) \, dxdy}.
\]
To minimize the energy, the level set function $\phi$ evolves according to the Euler–Lagrange equation, leading to the following PDE
\[
\frac{\partial \phi}{\partial t} = \delta(\phi) \left[ \mu \cdot \text{div}\left( \frac{\nabla \phi}{|\nabla \phi|} \right) 
- \lambda_1 \big(I(x, y) - c_1\big)^2 + \lambda_2 \big(I(x, y) - c_2\big)^2 \right],
\]
where the first term is a regularization term (curvature) and the second and third terms are data fidelity forces, pushing the contour to regions where $I$ is closer to $c_1$ or $c_2$. The Dirac delta function $\delta(\phi)$ is approximated as
\[
\delta_\epsilon(\phi) = \frac{1}{\pi} \cdot \frac{\varepsilon}{\phi^2 + \varepsilon^2},
\]
and the Heaviside function is approximated as
\[
H_\epsilon(\phi) = \frac{1}{2} \left(1 + \frac{2}{\pi} \arctan\left( \frac{\phi}{\varepsilon} \right) \right).
\]
In \cite{chan2001active}, authors proposed a numerical algorithm using finite differences to solve this problem. In the Chan-Vese image segmentation algorithm, the contour appears dense and scattered at the beginning because of the initialization of the level set function. Typically, the level set function is initialized with a shape (like a circle or square) that defines the starting contour. Around the zero level set, there may be many values close to zero due to discretization and numerical approximation, which can lead to the appearance of multiple contour lines when visualized. Additionally, some implementations deliberately initialize multiple contours to help the algorithm explore the image structure more effectively. As the algorithm iterates, these contours evolve and gradually merge into a single, smooth boundary that accurately segments the object of interest.

\begin{figure}
	\centering\includegraphics[width=1\linewidth]{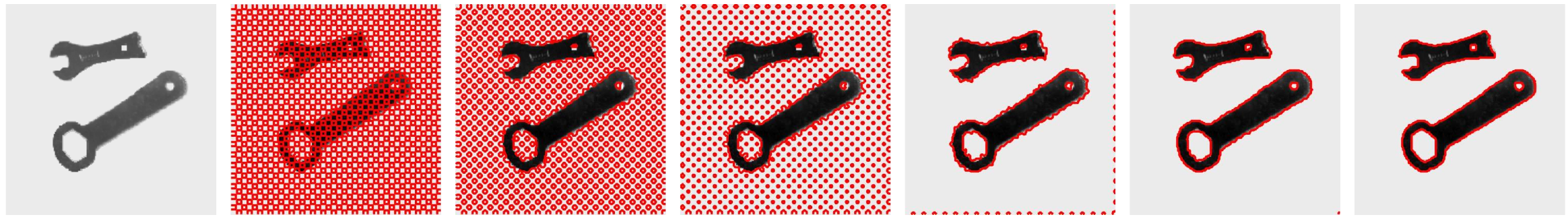}  
	\caption{Traditional image segmentation using variational PDE methods involves formulating an energy functional whose minimization corresponds to an optimal segmentation, and solving the associated partial differential equations to find this minimum. Here are the results of different number of iterations of the Chan-Vese algorithm.}   
\end{figure}

\subsection{Deep learning models}

The development of deep learning has revolutionized image segmentation, which enables outstanding performance in areas such as medical imaging, autonomous driving, and remote sensing. FCN and similar models have significantly advanced the use of deep learning architectures in the field of image semantic segmentation. Leading models including UNet, SegNet \cite{badrinarayanan2017segnet}, UNet++ \cite{Zhou2018UNetAN}, DeepLabV3+ \cite{Chen_2018_ECCV} and Segment Anything model \cite{kirillov2023segment,zhang2024segment} that excel in tasks requiring high accuracy and scalability, which leverage large datasets and powerful computational resources. In contrast to traditional variational models, deep learning provides greater flexibility in handling complex data, automates feature extraction, and supports end-to-end learning, which eliminates the need for manual parameter adjustment. Deep learning methods, however, often necessitate large-scale labeled datasets and significant computational resources, and their complex, non-intuitive nature can reduce interpretability. Overall, while deep learning dominates modern image segmentation due to its performance and flexibility, variational models remain valuable for specific applications where interpretability and theoretical rigor are prioritized.

Inspired by the advantages of both traditional mathematical models and data-driven methods, there is potential to incorporate robust mathematical principles and tools into neural networks for image segmentation tasks to get deep learning-enhanced variational models \cite{celledoni2021structure,lu2018beyond,ruthotto2020deep,weinan2017proposal}. A direct approach to achieving this integration involves incorporating functions derived from various PDE models into the loss function \cite{chen2019learning,kim2019cnn,le2018reformulating}. These techniques, called loss-inserting methods, tackle the challenge of numerous PDEs lacking defined energy properties and not being representable as gradient flows of explicit energy. Researchers have also explored modifying neural network architectures to improve their interpretability \cite{chen2015learning,cheng2019darnet,lunz2018adversarial,marcos2018learning,ruthotto2020deep}. For example, Liu et al. design Inverse Evolution Layers (IELs) by simulating the inverse of evolution processes governed by partial differential equations. Instead of promoting desirable properties. IELs act as bad property amplifiers and provide a principled and interpretable way to incorporate mathematical priors from PDEs into deep learning models, offering a powerful tool for enhancing segmentation performance and robustness, and emphasize undesirable characteristics, such as noise or concavity, in the network outputs during training, effectively penalizing them and guiding the model to produce more desirable results \cite{liu2025inverse}. Nevertheless, introducing physics-informed loss and regularization would greatly increase the difficulty of model training.

Recently, in \cite{AAM-39-4,tai2024pottsmgnet}, theoretical and practical connections between operator-splitting methods and deep neural networks demonstrate their synergistic application in improving image segmentation tasks through efficient optimization and enhanced model interpretability, which presents a rigorous mathematical framework that explains the architecture of encoder-decoder convolutional neural networks through the lens of optimization and control theory. PottsMGNet bridges the gap between deep learning and mathematical optimization, providing a principled explanation for encoder-decoder networks using the Potts model and multigrid-based numerical schemes. Based on the split Bregman algorithm for the Potts model, PottsNN integrates a total variation regularization term from Fields of Experts and parameterizes the penalty parameters and thresholding function of the model making manual settings trainable, which is a physics- and optimization-inspired neural network that unifies the strengths of PDE modeling and deep learning for explainable and effective image segmentation \cite{cui2024trainable,wang2024pottsnn}. After PottsMGNet, Doublewell Nets bridge the extension of the Allen-Cahn type Merriman-Bence-Osher scheme and neural networks to solve the Potts model \cite{liu2024double}. Doublewell Net is a deep neural network framework inspired by the doublewell potential model, which is commonly used in physics and variational image processing to model binary segmentation problems. The key idea behind Doublewell Net is to design a network whose architecture and dynamics mimic the gradient flow of an energy functional that includes a doublewell potential, encouraging the output to converge toward two distinct phases. However, the above models are all low-order while the higher-order Cahn-Hilliard equation can better maintain the boundary.

\subsection{Cahn-Hilliard equation}

The Cahn-Hilliard equation was originally proposed by Cahn and Hilliard and describes the phase separation that occurs when a mixture of two substances is quenched into an unstable state \cite{cahn1958free}. The Cahn-Hilliard equation is the gradient flow of the generalized Ginzberg Landau proper energy functional under the $H^{-1}$ norm. Van der Waals first proposed the generalized Ginzberg-Landau free energy functional, which accurately describes the mixing energy of two substances
\begin{equation}\notag
	E(u)=\int_{\Omega}\dfrac{\varepsilon^2}{2}|\nabla u|^2+W(u)\mathrm{d}x,
\end{equation}
where $u$ represents the concentration of one of the species, the concentration of the other species is $1-u$, $W(u)$ is the doublewell function $W(u)=u^2(1-u)^2$ or Lyapunov functional $W(u)=\frac{1}{4}(u^2-1)^2$ and the parameter $\varepsilon$ controls the interface between the two metals.

The space $H^{-1}$ is the zero-mean dual subspace of $H^1$, that is, for any given $v\in H^{-1},\int_{\Omega}v(x)dx=0$ holds if and only if $v=\Delta\phi,\int_{\Omega}\phi dx=0$. Define the inner product on $H^{-1}$ as
\begin{equation}\notag
	<v_1,v_2>_{H^{-1}} = <\nabla\phi_{1},\nabla\phi_2>_{L^2},
\end{equation}
where $v_1=\Delta\phi_1$, $v_2=\Delta\phi_2$. The Gateaux derivative of $E(u)$ is
\begin{equation}\notag
	E^{\prime}(u)=-\varepsilon^2\Delta u+W^{\prime}(u).
\end{equation}
Thereby,
\begin{equation}\notag
	\begin{split}
		\frac{\partial}{\partial \lambda}E(u+\lambda v)\big|_{\lambda=0}&=\int_{\Omega}(-\varepsilon^2\Delta u+W^{\prime}(u))\Delta\phi \mathrm{d}x\\
		&=<-\nabla(-\varepsilon^2\Delta u+W^{\prime}(u)),\nabla\phi>_{L^2}\\
		&=<-\Delta(-\varepsilon^2\Delta u+W^{\prime}(u)),\Delta\phi>_{H^{-1}}\\
		&=<-\Delta(-\varepsilon^2\Delta u+W^{\prime}(u)),v>_{H^{-1}},		
	\end{split}
\end{equation}
that is,  $\nabla_{H^{-1}}E(u)=-\Delta(-\varepsilon^2\Delta u+W^{\prime}(u))$. The directional derivative of $E$ at $u$ with respect to direction $v$ is $<\nabla_{H^{-1}}E(u),v>_{H^{-1}}$, along direction $v=-\nabla_{H^{-1}}E(u)$, the directional derivative $<\nabla_{H^{-1}}E(u),v>_{H^{-1}}$ is negative \
 and $\left|<\nabla_{H^{-1}}E(u),v>_{H^{-1}}\right|$ is the largest. This direction is called the direction of fastest descent, so the gradient flow is 
\begin{equation}\notag
	\frac{\partial u}{\partial t}=-\nabla_{H^{-1}}E(u)=-\Delta(\varepsilon^2\Delta u-W^{\prime}(u)).
\end{equation}Cahn-Hilliard equation is important in science and industry, whose fourth-order term ensures smooth boundaries and noise robustness, especially in image segmentation tasks. The corresponding modified Cahn-Hilliard model has been proposed in the study of the diffusion of droplets on solid surfaces and the repulsion and competition between biological populations. Scholars systematically studied the Cahn-Hilliard equation and have established mathematical theories about it. 

Here, we consider the modified Cahn-Hilliard model. Given image $f$ and the corresponding segmented image $u$, we consider the following Cahn-Hilliard type equation for binary image segmentation
\begin{equation}\label{modified}
	u_{t}=-\Delta\big(\varepsilon_{1}\Delta u-\frac{1}{\varepsilon_{2}}W^{\prime}(u)\big)-\{\lambda_{1}(f-c_{1})^2-\lambda_{2}(f-c_{2})^2\}\frac{\varepsilon_{3}}{\pi\{\varepsilon_{3}^2+(u-\frac{1}{2})^2\}},
\end{equation}
which can be simplified to the following minimization problem as
\begin{equation}\notag
	\begin{split}
		\min\limits_{u}\{E(u;c_1,c_2):&=\int_{\Omega}\big(\frac{\varepsilon_1}{2}|\nabla u|^2+\frac{1}{\varepsilon_{2}}W(u)\big)\mathrm{d}x\\
		&+\lambda_{1}\int_{\{u\geq\frac{1}{2}\}}(f-c_1)^2\mathrm{d}x+\lambda_{2}\int_{\{u<\frac{1}{2}\}}(f-c_2)^2\mathrm{d}x\},
	\end{split}
\end{equation}
where $\varepsilon_1,\varepsilon_2,\varepsilon_3,\lambda_1,\lambda_2>0$, $u$ satisfies $\partial u/\partial n=\partial \Delta u/\partial n=0$ on $\partial\Omega$, $W(u)=u^2(u-1)^2$, and $c_1$, $c_2$ are two constants that can be assigned using some strategy. These parameters depend on manual experience and affect the results strongly. For example, when the value range of image $f$ is $[0,1]$, we can initially set $c_1=1$, $c_2=0$, then solve the steady-state solution of Eq.~\eqref{modified} and update $c_1$, $c_2$ according to the following formula 
\begin{equation}\notag
	c_1=\frac{\int_{\Omega}\{\frac{1}{2}+\frac{1}{\pi}\arctan\big(\frac{u-\frac{1}{2}}{\varepsilon_{3}}\big)\}f\mathrm{d}x}{\int_{\Omega}\{\frac{1}{2}+\frac{1}{\pi}\arctan\big(\frac{u-\frac{1}{2}}{\varepsilon_3}\big)\}\mathrm{d}x},\ 
	c_2=\frac{\int_{\Omega}\{\frac{1}{2}-\frac{1}{\pi}\arctan\big(\frac{u-\frac{1}{2}}{\varepsilon_{3}}\big)\}f\mathrm{d}x}{\int_{\Omega}\{\frac{1}{2}-\frac{1}{\pi}\arctan\big(\frac{u-\frac{1}{2}}{\varepsilon_3}\big)\}\mathrm{d}x},
\end{equation} 
where the aforementioned model can also be extended for color image segmentation, as demonstrated in \cite{yang2019image}.

However, the above strategy suffers from parameter sensitivity, lack of adaptability, and heavy reliance on prior knowledge, leading to inconsistent results and high computational costs in complex scenarios. Furthemore, the implementation of modified complex Cahn-Hilliard equations facilitates multi-phase segmentation in \cite{wang2022multi}.

Define
\begin{equation}
	F(f)=[\lambda_{1}(f-c_{1})^2-\lambda_{2}(f-c_{2})^2]\frac{\varepsilon_{3}}{\pi[\varepsilon_{3}^2+(u-\frac{1}{2})^2]},
\end{equation}
where inspired by the approximation theory of deep neural network, we will use UNet class architecture to represent $F(f)$ as a subnetwork to avoid manual setting and adjustment of parameters like Doublewell Net \cite{Bao2023ApproximationAO,liu2024double}, {\color{blue} and in each subsequent step, the Sigmoid function is used to ensure that $F(f)$ is bounded}. Then we have to solve the following equation
\begin{equation}\label{F}
	u_{t}=-\Delta\big(\varepsilon_{1}\Delta u-\frac{1}{\varepsilon_{2}}W^{\prime}(u)\big)-F(f).
\end{equation}
	
	\section{Method}\label{Method}
	\subsection{Adaption to deep learning framework}
	For convenience, we convert Eq.~\eqref{F} into two coupled second-order parabolic equations
	\begin{equation}\label{tc}
		\begin{aligned}
			v &= \varepsilon_{1}\Delta u-\frac{1}{\varepsilon_{2}}W^{\prime}(u),\\
			u_t&=-\Delta v-F(f),
		\end{aligned}
	\end{equation}
	where $u$ and $v$ satisfy $\partial u/\partial n=\partial v/\partial n=0$. For spatial discretization, we use spatial steps $\Delta x_1=\Delta x_2=h$ for some $h>0$ and simultaneously, let $\tau$ be the time step, for $n\geq 0,t^n=n\tau$, then we have the outcome $u^n=u(t^n)$ and $v^n=v(t^n)$ at every substep. Set $u_0=\mathrm{Sig}(W^0*f+b^0)$ where $\mathrm{Sig}$ is the sigmoid function and $W^0$, $b^0$ are the convolution kernel and bias. We propose using a convolutional layer followed by a sigmoid function to generate an initial condition $u^0$ and then solve Eq.~\eqref{tc} until a finite time $t=T$ and use $u(x,T)$ as the final segmentation result.
	
	Thus, we can obtain
	\begin{equation}\label{v0}
		\begin{cases}
			v=\varepsilon_{1}\Delta u-\dfrac{1}{\varepsilon_{2}}W^{\prime}(u) ,& \text{in}\quad\Omega\times(0,T], \\
			\dfrac{\partial v}{\partial n}=0,	& \text{on}\quad\partial\Omega,\\[0.6em]
			v^0=\varepsilon_{1}\Delta u_0-\dfrac{1}{\varepsilon_{2}}W^{\prime}(u_0), & \text{in}\quad\Omega,
		\end{cases}
	\end{equation}
	and
	\begin{equation}\label{u0}
		\begin{cases}
			\dfrac{\partial u}{\partial t}=-\Delta v-F(f),&\text{in} \quad\Omega\times(0,T],\\[0.5em]
			\dfrac{\partial u}{\partial n}=0,&\text{on} \quad\partial\Omega,\\[0.5em]
			u^0=u_0,&\text{in}\quad\Omega.
		\end{cases}
	\end{equation}

	For $v$ at every substep, we directly compute it as the following scheme from Eq.~\eqref{v0}
	\begin{equation}\label{v1}
		\begin{cases}
			v^n=\varepsilon_{1}\Delta u^n-\dfrac{1}{\varepsilon_{2}}W^{\prime}(u^n),&\text{in}\quad\Omega, \\
			\dfrac{\partial v^n}{\partial n}=0	,&\text{on}\quad\partial\Omega,\\
			v^0=\varepsilon_{1}\Delta u_0-\dfrac{1}{\varepsilon_{2}}W^{\prime}(u_0), &\text{in}\quad\Omega,
		\end{cases}
	\end{equation}
	and for $u$ at every substep, we use a one-step forward Euler scheme to time discretize Eq.~\eqref{u0}
	\begin{equation}\label{u1}
		\begin{cases}
			u^{n+1}=u^n-\tau \Delta v^n-\tau F(f),&\text{in}\quad\Omega,\\
			\dfrac{\partial u^{n+1}}{\partial n}=0,&\text{on}\quad\partial\Omega,\\
			u^0=u_0,&\text{in}\quad\Omega,
		\end{cases}
	\end{equation}
		
{\color{blue}
	
	\begin{theorem}[Stability Estimate]
		Assume that $W'(u^n)$ is L-Lipschitz continuous, the region force term satisfies $\|F(u^n)\| \le M_F$ and $	\|\Delta u^{n+1}-\Delta u^n\|^2 \le C_\Delta$ based on the practical needs of image processing problems here, where $M_F$ and $C_\Delta$ are constants. 
		Let $u^0 \in H^2(\Omega)$. Then there exist positive constants $A,\ B,\ C,\ D$ independent of $n$ such that
		\[
		A\|u^{n+1}\|^2 + B\|\Delta u^{n+1}\|^2
		\le
		D\left(\|u^n\|^2 + \|\Delta u^n\|^2\right) + C,
		\]
	\end{theorem}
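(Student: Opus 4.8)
The plan is to run a discrete energy estimate directly on the two substeps \eqref{v1}--\eqref{u1}, working in $L^2(\Omega)$ with $\|\cdot\|$ the $L^2$ norm, and to use the homogeneous Neumann conditions $\partial u^{n+1}/\partial n = \partial v^n/\partial n = 0$ whenever an integration by parts is needed. I would bound the two target quantities $\|u^{n+1}\|^2$ and $\|\Delta u^{n+1}\|^2$ by separate inequalities and then add them.

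The bound on $\|\Delta u^{n+1}\|^2$ is immediate: writing $\Delta u^{n+1} = \Delta u^n + (\Delta u^{n+1}-\Delta u^n)$ and using the hypothesis $\|\Delta u^{n+1}-\Delta u^n\|^2 \le C_\Delta$ with $\|a+b\|^2 \le 2\|a\|^2 + 2\|b\|^2$ gives $\|\Delta u^{n+1}\|^2 \le 2\|\Delta u^n\|^2 + 2C_\Delta$, and also $\|\Delta u^{n+1}\| \le \|\Delta u^n\| + \sqrt{C_\Delta}$ for later use. For $\|u^{n+1}\|^2$ I would test the update $u^{n+1}-u^n = -\tau\Delta v^n - \tau F(f)$ against $u^{n+1}$ in $L^2$; the left-hand side gives $\tfrac12\|u^{n+1}\|^2 - \tfrac12\|u^n\|^2 + \tfrac12\|u^{n+1}-u^n\|^2$ by the polarization identity. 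The essential step is the fourth-order term: integrating by parts twice and using \emph{both} Neumann conditions rewrites $\langle \Delta v^n, u^{n+1}\rangle$ as $\langle v^n, \Delta u^{n+1}\rangle$, after which substituting $v^n = \varepsilon_1\Delta u^n - \varepsilon_2^{-1}W'(u^n)$ from \eqref{v1} leaves only the second-order term $\varepsilon_1\langle\Delta u^n,\Delta u^{n+1}\rangle$ and the zeroth-order term $\varepsilon_2^{-1}\langle W'(u^n),\Delta u^{n+1}\rangle$ — this is what makes the argument close, since $\Delta^2 u^n$ and $\|\Delta v^n\|$ are not controlled by the hypotheses.

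It then remains to estimate these three terms. I would split $\varepsilon_1\langle\Delta u^n,\Delta u^{n+1}\rangle = \varepsilon_1\|\Delta u^n\|^2 + \varepsilon_1\langle\Delta u^n,\Delta u^{n+1}-\Delta u^n\rangle$ and control the cross term by Cauchy--Schwarz, $C_\Delta$, and Young's inequality; bound $\varepsilon_2^{-1}\langle W'(u^n),\Delta u^{n+1}\rangle \le \varepsilon_2^{-1}\|W'(u^n)\|\,\|\Delta u^{n+1}\|$, where Lipschitz continuity of $W'$ gives $\|W'(u^n)\| \le L\|u^n\| + \|W'(0)\|\,|\Omega|^{1/2}$ (with $W'(0)=0$ for $W(u)=u^2(u-1)^2$), then insert $\|\Delta u^{n+1}\| \le \|\Delta u^n\| + \sqrt{C_\Delta}$ and apply Young again so this term is dominated by a combination of $\|u^n\|^2$, $\|\Delta u^n\|^2$ and a constant; and bound $\tau\langle F(f),u^{n+1}\rangle \le \tau M_F\|u^{n+1}\| \le \tfrac14\|u^{n+1}\|^2 + \tau^2 M_F^2$. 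Discarding the nonnegative term $\tfrac12\|u^{n+1}-u^n\|^2$ and absorbing $\tfrac14\|u^{n+1}\|^2$ into the left-hand side yields $\|u^{n+1}\|^2 \le D_1\|u^n\|^2 + D_2\|\Delta u^n\|^2 + C_1$ for explicit $D_1,D_2,C_1$ depending only on $\tau,\varepsilon_1,\varepsilon_2,L,M_F,C_\Delta,|\Omega|$. Adding this to $\|\Delta u^{n+1}\|^2 \le 2\|\Delta u^n\|^2 + 2C_\Delta$ and taking $A=B=1$, $D = \max\{D_1,\,D_2+2\}$, $C = C_1 + 2C_\Delta$ gives the claimed inequality; these constants are independent of $n$ since all parameters are fixed, and $u^0\in H^2(\Omega)$ simply ensures the right-hand side is finite at $n=0$ (and permits iterating the one-step bound if a bound uniform in $n$ is desired).

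The main obstacle is precisely the fourth-order term $\Delta v^n$: one must not try to estimate $\|\Delta v^n\|$ directly — the hypotheses give no control there — but instead transfer one Laplacian onto $u^{n+1}$ through the two Neumann conditions, so that only $\Delta u^n$ (order two) and $W'(u^n)$ (order zero) appear. A secondary technical point is that the coefficient of $\|\Delta u^n\|^2$ produced by $-\varepsilon_1\|\Delta u^n\|^2$ against the Lipschitz contribution $L/\varepsilon_2$ need not be sign-definite (dissipative); since the theorem only asks for a one-step estimate with \emph{some} constant $D$, not necessarily $D<1$, a crude bound on that coefficient suffices, and no smallness condition on $\tau$ or relation between $L$ and $\varepsilon_1\varepsilon_2$ is needed.
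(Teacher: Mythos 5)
Your proof is correct, and its core is the same as the paper's: test the update with $u^{n+1}$ in $L^2$, transfer the Laplacian from $v^n$ onto $u^{n+1}$ through the two homogeneous Neumann conditions so that only $\Delta u^n$ and $W'(u^n)$ appear (never $\|\Delta^2 u^n\|$ or $\|\Delta v^n\|$, which the hypotheses do not control), then finish with Cauchy--Schwarz, Young's inequality, and the assumed bounds on $F$ and on $\Delta u^{n+1}-\Delta u^n$. The one place you genuinely diverge is in how the $B\|\Delta u^{n+1}\|^2$ term on the left is produced. The paper adds $\tau\varepsilon_1\|\Delta u^{n+1}\|^2$ to both sides and invokes the identity $2\langle a-b,a\rangle=\|a\|^2+\|a-b\|^2-\|b\|^2$ with $a=\Delta u^{n+1}$, $b=\Delta u^n$, which yields $\tfrac{\tau\varepsilon_1}{2}\|\Delta u^{n+1}\|^2$ on the left but obliges it to choose the Young parameters with $\delta>\tau$ and $\gamma>L/(\varepsilon_1\varepsilon_2)$ so that $A>0$ and $B>0$, and produces a $B$ proportional to $\tau$. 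You instead observe that the hypothesis $\|\Delta u^{n+1}-\Delta u^n\|^2\le C_\Delta$ already gives $\|\Delta u^{n+1}\|^2\le 2\|\Delta u^n\|^2+2C_\Delta$ outright, decouple the two estimates, and add them, obtaining $A=B=1$ with no auxiliary parameter choices. Your version is slightly more elementary and makes explicit that, under the stated hypotheses, the $\Delta u^{n+1}$ half of the conclusion is essentially free; the paper's version keeps the two halves coupled in a single energy identity, which is closer in spirit to a genuine dissipative stability argument and would be the right skeleton if one wanted to weaken or drop the a priori assumption on $\|\Delta u^{n+1}-\Delta u^n\|$. Both arguments are valid, and your handling of the Lipschitz bound on $W'$ (via $W'(0)=0$ for the double-well potential) is if anything cleaner than the paper's.
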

	
	\begin{proof}
		
		Consider the scheme
		\begin{equation}\notag
			\begin{cases}
				v^n = \varepsilon_1 \Delta u^n - \dfrac{1}{\varepsilon_2} W'(u^n), \\[0.5em]
				u^{n+1} = u^n - \tau \Delta v^n - \tau F(u^n).
			\end{cases}
		\end{equation}
		Eliminating $v^n$ gives
		\begin{equation}\label{eq:eliminate}
			u^{n+1} =
			u^n
			-\tau\varepsilon_1\Delta^2 u^n
			+
			\frac{\tau}{\varepsilon_2}\Delta(W'(u^n))
			-
			\tau F(u^n).
		\end{equation}
		Taking the $L^2$ inner product of \eqref{eq:eliminate} with $u^{n+1}$ and adding $\tau\varepsilon_1\|\Delta u^{n+1}\|^2$ to both sides yields
		\begin{equation}\notag
			\begin{aligned}
				\|u^{n+1}\|^2+\tau\varepsilon_1\|\Delta u^{n+1}\|^2
				=&
				\langle u^n,u^{n+1}\rangle
				-\tau\varepsilon_1\langle\Delta u^n,\Delta u^{n+1}\rangle
				+\tau\varepsilon_1\|\Delta u^{n+1}\|^2
				\\
				&+
				\frac{\tau}{\varepsilon_2}\langle W'(u^n),\Delta u^{n+1}\rangle
				-\tau\langle F(u^n),u^{n+1}\rangle .
			\end{aligned}
		\end{equation}
		
		By
		$
		2\langle a-b,a\rangle=\|a\|^2+\|a-b\|^2-\|b\|^2
		$
		with $a=\Delta u^{n+1}$ and $b=\Delta u^n$, we obtain
		\begin{equation}\notag
			\begin{aligned}
				\|u^{n+1}\|^2
				+
				\frac{\tau\varepsilon_1}{2}\|\Delta u^{n+1}\|^2
				=
				&
				\langle u^n,u^{n+1}\rangle
				-
				\frac{\tau\varepsilon_1}{2}\|\Delta u^n\|^2	-
				\tau\langle F(u^n),u^{n+1}\rangle 
				\\
				&
				+
				\frac{\tau\varepsilon_1}{2}\|\Delta u^{n+1}-\Delta u^n\|^2
				+
				\frac{\tau}{\varepsilon_2}\langle W'(u^n),\Delta u^{n+1}\rangle.
			\end{aligned}
		\end{equation}
		Utilizing Cauchy--Schwarz inequality and Young inequality with $W''(u^n) \le K$ and $\|F(u^n)\| \le M_F$,
		
		\[
		\langle u^n,u^{n+1}\rangle
		\le
		\frac12\|u^n\|^2+\frac12\|u^{n+1}\|^2 ,
		\]
		
		\[
		-\tau\langle F(u^n),u^{n+1}\rangle
		\le
		\frac{\tau}{2\delta}\|u^{n+1}\|^2
		+
		\frac{\tau\delta}{2}M_F^2,
		\]
		
		\[
		\frac{\tau}{\varepsilon_2}\langle W'(u^n),\Delta u^{n+1}\rangle
		\le
		\frac{\tau L\gamma}{2\varepsilon_2}(1+\|u^n\|^2)
		+
		\frac{\tau L}{2\varepsilon_2\gamma}\|\Delta u^{n+1}\|^2 .
		\]
		
		\[
		\frac{\tau\varepsilon_1}{2}\|\Delta u^{n+1}-\Delta u^n\|^2
		\le
		\frac{\tau\varepsilon_1}{2}C_\Delta .
		\]
Collecting the estimates yields
		\begin{equation}\notag
			A\|u^{n+1}\|^2
			+
			B\|\Delta u^{n+1}\|^2
			\le
			D\left(\|u^n\|^2
			+
			\|\Delta u^n\|^2\right)
			+
			C,
		\end{equation}
where
		
		\[
		A = \frac12-\frac{\tau}{2\delta},\ 
		B = \frac{\tau\varepsilon_1}{2}-\frac{\tau L}{2\varepsilon_2\gamma},\ D = \frac12+\frac{\tau L\gamma}{2\varepsilon_2},\ C =
		\frac{\tau L\gamma}{2\varepsilon_2}
		+
		\frac{\tau\varepsilon_1}{2}C_\Delta
		+
		\frac{\tau\delta}{2}M_F^2.
		\]
Choosing $\delta>\tau$ and $\gamma>\dfrac{L}{\varepsilon_1\varepsilon_2}$ ensures $A>0$ and $B>0$, which completes the proof.
		
	\end{proof}
	
}
	
	Suppose now that we are given a training set of images $\{f_{i}\}_{i=1}^{I}$ with their foreground-background segmentation masks $\{g_{i}\}_{i=1}^{I}$ and we will learn a data-driven operator $F$ so that for any given image $f$ with similar properties as the training set, the steady state of Eq.~\eqref{F} is close to its segmentation $g$. Denote $\Theta$ as the collection of all parameters to be determined from the data, i.e., the parameters in $F$ and then $u(x,T)$ from Eq.~\eqref{u1} only depends on $f$ and $\Theta$. Furthermore, we will determine $\Theta$ by solving
	\begin{equation}\notag
		\min\limits_{\Theta}\frac{1}{I}\sum\limits_{i=1}^{I}\ell(u(x,T;\Theta,f_{i}),g_{i}),
	\end{equation}
	where $\ell(\cdot,\cdot)$ is a loss function which could be the loss functional including hinge loss, logistic loss, and $L^2$ norm, measuring the differences between its arguments \cite{rosasco2004loss}.
	
	\subsection{Variational Model Based
		Tailored UNet}
	\begin{figure}[ht]
		\centering\includegraphics[width=1\linewidth]{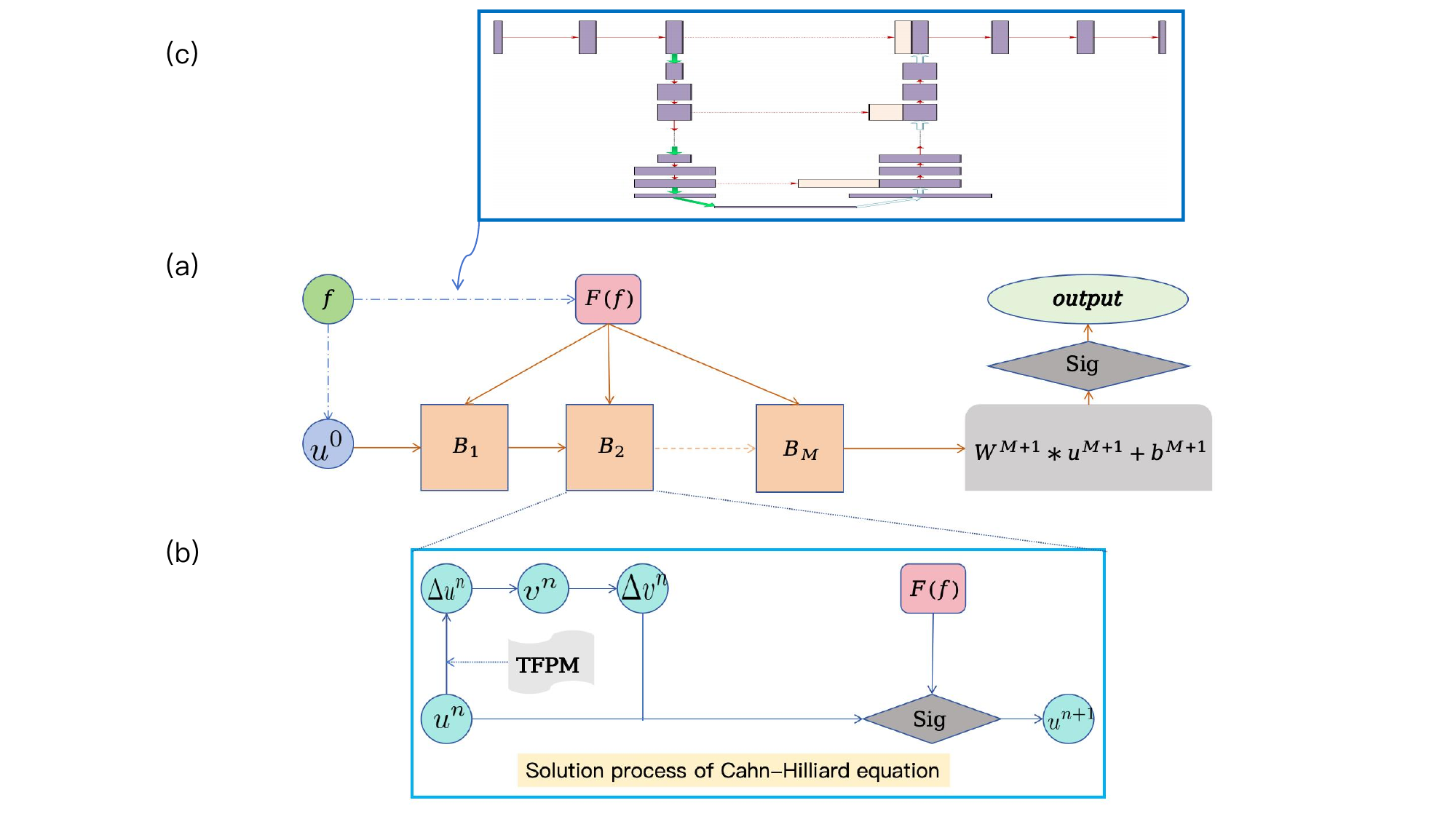}  
		\caption{VM\_TUNet architecture. (a) VM\_TUNet process. (b) VM\_TUNet block, which is the process of solving Cahn-Hilliard equation. (c) UNet class architecture which approximates $F(f)$ in (a).}  
		\label{VMTUNet illustration}  
	\end{figure}
	Assume $f$ is an image of size $N_1\times N_2\times D$, then we approximate $F(f)$ in Eq.~\eqref{u1} by a UNet class neural network specified by a channel vector $\boldsymbol{c}$. We call the procedure $u^n\rightarrow v^n\rightarrow u^{n+1}$ a VM\_TUNet block, denoted by $B_{n+1}$; see Figure~\ref{VMTUNet illustration}(b) for an illustration with activation $\mathrm{Sig}$, where $B_{n+1}$ contains trainable parameters of $F(f)$. The input for $B_{n+1}$ includes the output of the previous block $u^n,\Delta v_{n}$ and $F(f)$. There are $M$ VM\_TUNet blocks, $F(f)$ is passed to all VM\_TUNet blocks, and $u^0$ is passed through every VM\_TUNet block sequentially. Denoting the $m$-th VM\_TUNet block by $B_{m}$, and after $B_M$, we add a convolution layer followed by a sigmoid function. Denote the kernel and bias in the last convolution layer by $W^{M+1}$ and $b^{M+1}$, respectively, then we have VM\_TUNet as the following formula
	\begin{equation}\notag
		P(f)=\mathrm{Sig}(W^{M+1}*(B_M(\cdots B_2(B_1(u^0,F(f)),F(f)),F(f))+b^{M+1}),
	\end{equation}
	where we show the architecture of VM\_TUNet in Figure \ref{VMTUNet illustration}.
	
	PottsMGNet and Doublewell Nets derived from the Potts model and operator-splitting method give a clear explanation for the encode-decode type of neural network like UNet. However, the mathematical treatment and splitting treatment of PottsMGNet and Doublewell Net II (DN-II) are different from those of Doublewell Net I (DN-I) and VM\_TUNet here. In addition, DN-I and VM\_TUNet fix $F(f)$ over time and assume $F(f)$ is only a function of the input image $f$ \cite{liu2024double}, but DN-II is not. In the meantime, VM\_TUNet uses a higher-order Cahn-Hilliard equation model so that the sharp boundaries are well preserved during image segmentation, compared to the previous methods.
	
	\paragraph{\textbf{Accommodate periodic boundary conditions}}
	Periodic boundary conditions are often considered in image segmentation \cite{liu2024double}. Let $\Omega=[0,L_1]\times[0,L_2]$ and denote the two spatial directions by $x$ and $y$. To accommodate the periodic boundary conditions, we replace Eq.~\eqref{v1} and Eq.~\eqref{u1} by 
	\begin{equation}\label{v2}
		\begin{cases}
			v^n=\varepsilon_{1}\Delta u^n-\dfrac{1}{\varepsilon_{2}}\big(4(u^n)^3-6(u^n)^2+2u^n\big),&\text{in}\quad\Omega, \\
			v^n(0,y)=v^n(L_1,y),&0\leq y\leq L_2,\\[0.5em]
			v^n(x,0)=v^n(x,L_2),&{ }0\leq x\leq L_1,\\
			v^0=\varepsilon_{1}\Delta u_0-\dfrac{1}{\varepsilon_{2}}\big(4(u_0)^3-6(u_{0})^2+2u_{0}\big),&\text{in}\quad\Omega,
		\end{cases}
	\end{equation}
	and 
	\begin{equation}\label{u2}
		\begin{cases}
			u^{n+1}=u^n-\tau \Delta v^n-\tau F(f),&\text{in}\quad\Omega,\\
			u^{n+1}(0,y)=u^{n+1}(L_1,y),&0\leq y\leq L_2,\\
			u^{n+1}(x,0)=u^{n+1}(x,L_2),&0\leq x\leq L_1,\\
			u^0=u_0,&\text{in}\quad\Omega,
		\end{cases}
	\end{equation}
	respectively.
	
	\paragraph{\textbf{Tailored finite point method}}
	For $\Delta u^n$ in Eq.~\eqref{v2}, we propose using the tailored finite point method (TFPM) method to compute it \cite{Han2010TailoredFP}. Let
	\begin{equation}\notag
		u(x,y)=c_{0}+c_1e^{-\lambda x}+c_{2}e^{\lambda x}+c_{3}e^{-\lambda y}+c_4e^{\lambda y},
	\end{equation}
	by linearizing $W^{\prime}(u)$ we have
	\begin{equation}\notag
		W^{\prime}(u)=4u^3-6u^2+2u=(4u^2+2)u-6u^2,
	\end{equation}
	where $4u^2+2$ and $-6u^2$ can be seen as slice constants \cite{yang2019image}, then we can obtain 
	\begin{equation}\notag
		\lambda=\sqrt{\dfrac{4(u_{i,j})^{2}+2}{\varepsilon_{1}\varepsilon_2}},\  c_0=\dfrac{6(u_{i,j})^{2}}{4(u_{i,j})^{2}+2},
	\end{equation}
	from
	\begin{equation}\notag
		\varepsilon_1\Delta u=\dfrac{1}{\varepsilon_2}(4(u_{i,j})^{2}+2)u-\dfrac{6}{\varepsilon_2}(u_{i,j})^{2},
	\end{equation}
	where $u_{i,j}^n=u^n(ih,jh)$ on discretized mesh points.
	
	Moreover, by solving
	\begin{equation}\notag
		\begin{aligned}
			c_{1}e^{-\lambda h}+c_{2}e^{\lambda h}+c_{3}+c_{4}+c_{0}=u^{n}_{i+1,j},\\c_{1}+c_{2}+c_{3}e^{\lambda h}+c_{4}e^{-\lambda h}+c_{0}=u^{n}_{i,j+1},\\c_{1}e^{\lambda h}+c_{2}e^{-\lambda h}+c_{3}+c_{4}+c_{0}=u^{n}_{i-1,j},\\
			c_{1}+c_{2}+c_{3}e^{-\lambda h}+c_{4}e^{\lambda h}+c_{0}=u^{n}_{i,j-1},\\
		\end{aligned}
	\end{equation}
	we have
	\begin{equation}\notag
		\Delta u^n=\lambda^2 \dfrac{u^{n}_{i+1,j}+u^{n}_{i,j+1}+u^{n}_{i-1,j}+u^{n}_{i,j-1}-4c_0}{4\cosh^2({\lambda h/2})}.
	\end{equation}
	
	For $\Delta v^n$ in Eq.~\eqref{u2}, it is the {\color{blue} Laplacian } of $v^n$ approximated by central difference, which is realized by convolution $W_{\Delta}*v^n$ with
	\begin{equation}\label{laplace}
		W_{\Delta}=\frac{1}{h^2}
		\begin{bmatrix}
			0 & 1 & 0\\
			1 & -4 & 1 \\
			0 & 1 & 0
		\end{bmatrix}.
	\end{equation}
	
	\subsection{UNet class}
	The UNet class is designed to capture multiscale features of images, which features a symmetric encoder-decoder structure with skip connections to combine high-resolution features from the encoder with upsampled features from the decoder. For the resolution levels in the encoding and decoding parts, ordered from finest to coarsest, we represent their corresponding number of channels as a vector $\boldsymbol{c}=[c_1,\dots,c_S]$, where each element $c_s(s=1,\dots,S)$ is a positive integer, and $S$ denotes the total number of resolution levels. For an image input with size $N_1\times N_2\times D$, the structure of such a class is illustrated in Figure~\ref{UNet architecture}. Furthermore, UNet class has the advantage of light parameters compared to other deep learning methods such as Segment Anything model (SAM), so we prefer it for approximation of $F(f)$.
	
	\begin{figure}[ht]
		\centering\includegraphics[width=1.0\linewidth]{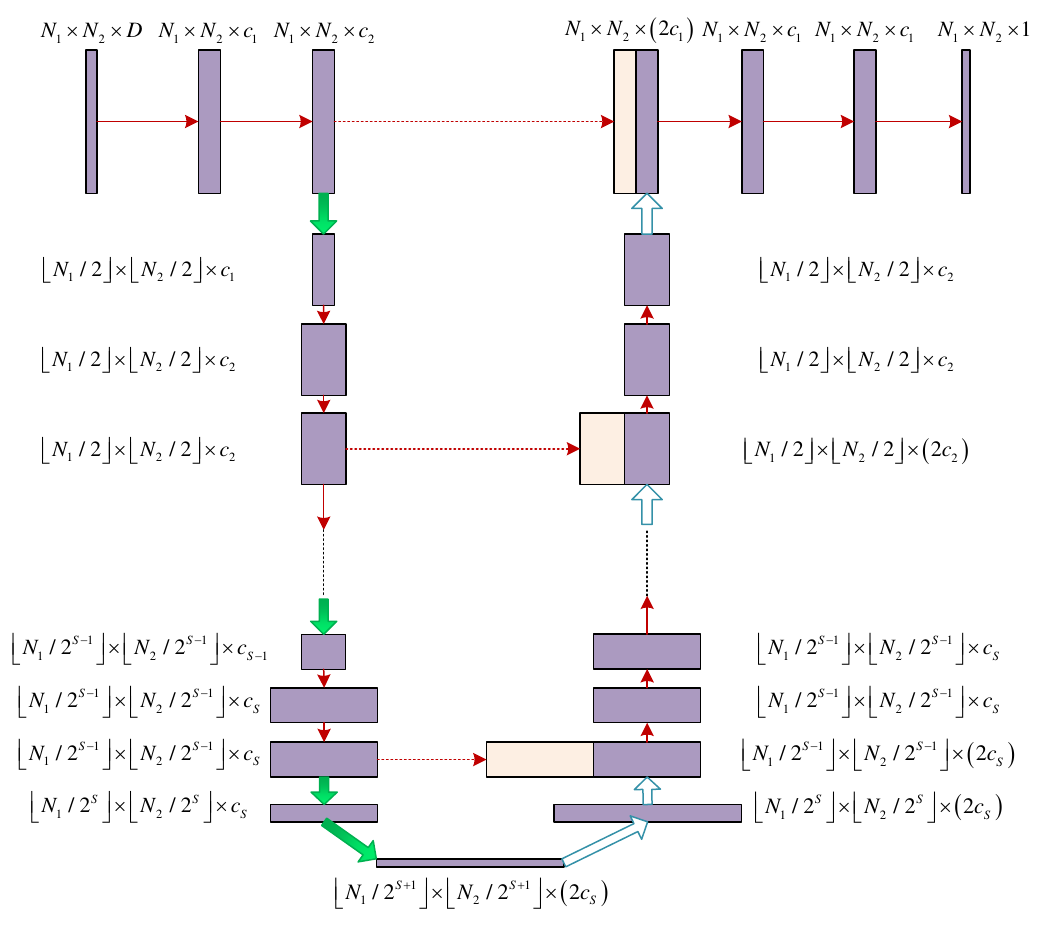}
		\caption{Illustration of UNet type network with input of size $N_1\times N_2\times D$. The left branch is the encoding part, the right branch is the decoding part, and the bottom rectangle denotes the bottleneck. Green arrows represent downsampling operations. Transparent arrows represent upsampling operations. Horizontal red dashed arrows represent skip connections. The orange rectangles denote the outputs of the encoding part that are passed to the decoding part via the skip connections. The length and width of the rectangle represent the output resolution and number of channels, respectively. Every network in this class can be fully characterized by the channels vector $\boldsymbol{c}$: given a $\boldsymbol{c}\in\mathbb{R}^S$, the corresponding network has $S+1$ resolution levels, $c_s$ channels at resolution level $s$ for $1\leq s\leq S$, and $2c_S$ channels at resolution level $S+1$ \cite{liu2024double}.}
		\label{UNet architecture}
	\end{figure}

	The architecture is designed to capture multiscale features of images: each resolution level corresponds to features of one scale. For the original UNet, it has $\boldsymbol{c}=[64,128,256,512]$. The general UNet type architecture used here for image segmentation is inspired by the original UNet, which features a symmetric encoder-decoder design with skip connections that bridge high-resolution features from the contracting path to the expanding path. This structure enables precise localization while maintaining semantic context, which makes UNet especially effective in segmentation tasks with limited training data. The architecture follows this foundational UNet paradigm, which maintains the characteristic downsampling and upsampling paths connected via skip connections. This modification enhances the representational robustness of the latent features, encouraging a more distinct and stable embedding of segmentation-relevant structures. While still falling under the umbrella of "UNet-like" architectures, VM\_TUNet extends the traditional UNet by embedding a task-driven regularization objective into its core, which contributes to improved generalization and sharper segmentation boundaries—especially in challenging or ambiguous regions of the input.
	
	\subsection{Lightweight architecture}
	In this paper, we only modified the PDE equation and calculation method, and did not change the original architecture of UNet. The number of parameters in our model is not much different from that in DN-I, and we fully utilize the advantage of the UNet network architecture being lightweight compared to other deep neural networks, such as the Segment Anything model (SAM), Transformer and so on. The comparison of parameters of various deep learning methods in image segmentation is shown in Table~\ref{table3}.
	
	\begin{table}[ht]
		\caption{Comparison of the number of parameters of DN-I, VM\_TUNet with UNet, UNet++, DeepLabV3+, and TransUNet, SAM on ECSSD (M: Represents one million parameters).}
		\label{table3}
		\centering
		\begin{tabular}{ccccccc}
			\toprule
			\multicolumn{7}{c}{Number of parameters on ECSSD} \\
			\midrule
			DN-I & VM\_TUNet & UNet & UNet++ & DeepLabV3+ & TransUNet & SAM \\
			7.7M & 7.8M & 31.0M & 35.0M & 40.0M & 105.0M & \textgreater{} 600.0M \\
			\bottomrule
		\end{tabular}
	\end{table}
	
	In certain tasks, especially those involving limited computational resources or small-scale datasets, it is often preferable to use lightweight architectures like UNet over large-scale models such as the SAM. UNet offers a highly efficient encoder-decoder structure with significantly fewer parameters, making it ideal for applications like medical image segmentation or embedded systems where inference speed and memory usage are critical. While SAM demonstrates strong generalization and zero-shot capabilities, its massive size and resource demands make it less suitable for scenarios requiring fast, cost-effective deployment or fine-tuning on small domain-specific datasets. Another advantage of UNet is that it does not require pretraining on large-scale datasets, unlike many modern deep learning models that rely on extensive pretraining. UNet is designed to perform well even when trained from scratch on relatively small datasets, thanks to its symmetric architecture and skip connections that preserve spatial information effectively. This makes it particularly suitable for domains like biomedical imaging, where annotated data is scarce and domain-specific features differ significantly from natural images.
	
	\section{Experiments}\label{Experiments}
	In this section, we evaluate VM\_TUNet on four semantic segmentation datasets: The Extended Complex Scene Saliency Dataset (ECSSD) \cite{7182346}, the Retinal Images Vessel Tree Extraction Dataset (RITE) \cite{hu2013automated}, Hong Kong University-Saliency Dataset (HKU-IS) \cite{LiYu15} and Dalian University of Technology-OMRON Saliency Dataset (DUT-OMRON) \cite{yang2013saliency}. We train with Adam optimizer and 600 epochs for ECSSD and RITE, and 800 epochs for HKU-IS and DUT-OMRON. In the VM\_TUNet model, without specification, the UNet architecture is employed with a channel configuration of $\boldsymbol{c}=[128,128,128,128,256]$. {\color{blue} The model consists of 10 blocks, and the parameters are set as follows: $\varepsilon_{1}=1$, $\varepsilon_{2}=1$ and $\tau=0.5$, where the model is robust within a reasonable range of parameters, but extreme values can still affect convergence and segmentation quality.} We implement all numerical experiments on a single NVIDIA RTX 4090 GPU. The PyTorch code of DN-I is
	available at \url{https://github.com/liuhaozm/Double-well-Net}. In our experiments, UNet, UNet++ and DeepLabV3+ are implemented by using the Segmentation Models PyTorch package \cite{Iakubovskii:2019}. The comparison of running time of different methods above on ECSSD is shown in Table \ref{table1}. 
	
	\begin{table}[ht]
		\caption{Comparison of running time of UNet, UNet++, DeepLabV3+, DN-I, and VM\_TUNet on ECSSD (s: Represents seconds per epoch).}
		\label{table1}
		\centering
		\begin{tabular}{ccccccc}
			\toprule
			\multicolumn{5}{c}{Running time per epoch on ECSSD} \\
			\midrule
			UNet & UNet++ & DeepLabV3+ & DN-I & VM\_TUNet \\
			7.72s & 8.51s & 9.18s & 11.32s & 11.81s \\
			\bottomrule
		\end{tabular}
	\end{table}
	
	{\color{blue}
		As shown in Tables 1 and 2 of the paper, the parameter counts of DN-I and VM\_TUNet are indeed lower than that of UNet. However, DN-I and VM\_TUNet adopt an operator splitting scheme that involves both linear and nonlinear operations. In practice, each block represents one iterative solution step of the original equation, which consumes additional computational time.
	}
	\subsection{Comparison with other networks}
	We compare the proposed model with corresponding state-of-the-art image segmentation convolutional neural networks mainly including UNet \cite{ronneberger2015u}, UNet++ \cite{Zhou2018UNetAN}, DeepLabV3+ \cite{Chen_2018_ECCV} and DN-I \cite{liu2024double}. In these models, the output is generated by passing the final layer through a sigmoid activation function, resulting in a tensor where each element falls within the range of $[0,1]$. To convert this into a binary segmentation map, a threshold value $T$ is applied to the output matrix as
	\begin{equation}
		T\circ P(f)=
		\begin{cases}
			1\quad \text{if } P(f)\geq 0.5,\\
			0\quad \text{if } P(f)<0.5.
		\end{cases}
	\end{equation}
	
	The similarity between the predicted output of the model and the provided ground truth mask is evaluated using two metrics, accuracy and the dice score:
	\begin{equation}
		\text{accuracy}=\frac{1}{K}\sum\limits_{k=1}^{K}\left[\dfrac{|[T\circ P(f_{k})]\cap g_{k}|}{N_{1}N_{2}}\times 100\%\right],
	\end{equation}
	and
	\begin{equation}
		\text{dice score}=\frac{1}{K}\sum\limits_{k=1}^{K}\left[\dfrac{2|[T\circ P(f_{k})]\cap g_{k}|}{|T\circ P(f)|+|g|}\right],
	\end{equation}
	where $|g|$ denotes the number of nonzero elements of a binary function $g$ and $\cap$ is the logic ``and" operation.
	
	The chosen images are intended to visually demonstrate the distinctions between the proposed approach and currently available CNN architectures. The predictions generated by the proposed method closely align with the ground truth masks, whereas competing models exhibit inaccuracies, either by incorrectly segmenting certain objects or failing to capture specific regions entirely, especially at the borders. Particularly, for the cicada image segmentation task in the first row of Figure~\ref{HKU1}, whose ground truth does not have antennae, but our method makes it. To measure the complexity of different models, we present the outcome of loss, accuracy, and dice score in Table~\ref{table2}, and we can see that VM\_TUNet outperforms other models.
	
	\begin{table}
		\caption{Comparison of the Accuracy and Dice score of VM\_TUNet with UNet, UNet++, DeepLabV3+, and DN-I  on ECSSD, RITE, HKU-IS and DUT-OMRON.}
		\label{table2}
		\centering
		\resizebox{0.98\textwidth}{!}{
			\begin{tabular}{lllllllll}
				\toprule
				&\multicolumn{2}{c}{ECSSD}&    \multicolumn{2}{c}{RITE}  &
				\multicolumn{2}{c}{HKU-IS}&
				\multicolumn{2}{c}{DUT-OMRON}\\
				\cmidrule(r){2-3}  \cmidrule(r){4-5} \cmidrule(r){6-7} \cmidrule(r){8-9}&  Accuracy   & Dice score &Accuracy&Dice score&Accuracy&Dice score&Accuracy&Dice score \\
				\midrule
				\multicolumn{1}{c}{UNet}& \multicolumn{1}{c}{0.894$\pm$0.003}& \multicolumn{1}{c}{0.868$\pm$0.001}& \multicolumn{1}{c}{0.930$\pm$0.004}& \multicolumn{1}{c}{0.678$\pm$0.002}&
				\multicolumn{1}{c}{0.904$\pm$0.003}&
				\multicolumn{1}{c}{0.871$\pm$0.001}&
				\multicolumn{1}{c}{0.881$\pm$0.001}&
				\multicolumn{1}{c}{0.859$\pm$0.002}\\ 
				\multicolumn{1}{c}{UNet++}& \multicolumn{1}{c}{0.900$\pm$0.004}& \multicolumn{1}{c}{0.879$\pm$0.001}& \multicolumn{1}{c}{0.937$\pm$0.005}& \multicolumn{1}{c}{0.682$\pm$0.001}&
				\multicolumn{1}{c}{0.908$\pm$0.001}&
				\multicolumn{1}{c}{0.877$\pm$0.002}&
				\multicolumn{1}{c}{0.889$\pm$0.002}&
				\multicolumn{1}{c}{0.863$\pm$0.001}\\ 	
				\multicolumn{1}{c}{DeepLabV3+}& \multicolumn{1}{c}{0.910$\pm$0.003}& \multicolumn{1}{c}{0.885$\pm$0.002}& \multicolumn{1}{c}{0.942$\pm$0.002}& \multicolumn{1}{c}{0.695$\pm$0.001}&
				\multicolumn{1}{c}{\textbf{0.923}$\pm$0.002}&
				\multicolumn{1}{c}{0.887$\pm$0.002}&
				\multicolumn{1}{c}{0.892$\pm$0.001}&
				\multicolumn{1}{c}{0.868$\pm$0.002}\\ 
				\multicolumn{1}{c}{DN-I}& \multicolumn{1}{c}{0.921$\pm$0.002}& \multicolumn{1}{c}{0.889$\pm$0.001}& \multicolumn{1}{c}{0.941$\pm$0.002}& \multicolumn{1}{c}{0.702$\pm$0.002}&
				\multicolumn{1}{c}{0.910$\pm$0.003}&
				\multicolumn{1}{c}{0.881$\pm$0.001}&
				\multicolumn{1}{c}{0.890$\pm$0.004}&
				\multicolumn{1}{c}{0.871$\pm$0.001}\\
				\multicolumn{1}{c}{VM\_TUNet}& \multicolumn{1}{c}{\textbf{0.937}$\pm$0.002}& \multicolumn{1}{c}{\textbf{0.892}$\pm$0.002}& \multicolumn{1}{c}{\textbf{0.948}$\pm$0.001} & \multicolumn{1}{c}{\textbf{0.713}$\pm$0.002} &
				\multicolumn{1}{c}{0.921$\pm$0.002}&
				\multicolumn{1}{c}{\textbf{0.890}$\pm$0.001}&
				\multicolumn{1}{c}{\textbf{0.903}$\pm$0.002}&
				\multicolumn{1}{c}{\textbf{0.878}$\pm$0.002}\\ 
				\bottomrule
		\end{tabular}}
	\end{table}
	
	\paragraph{\textbf{ECSSD}}
	ECSSD is a semantic segmentation data set containing 1000 images with complex backgrounds and manually labeled masks, we resize all images to the size 256$\times$256, using 800 images for training and 200 images for testing and the partial results of ECSSD is shown in Figure \ref{crane-tiger}.
	
	\begin{figure}[ht]
		\centering\includegraphics[width=1\linewidth]{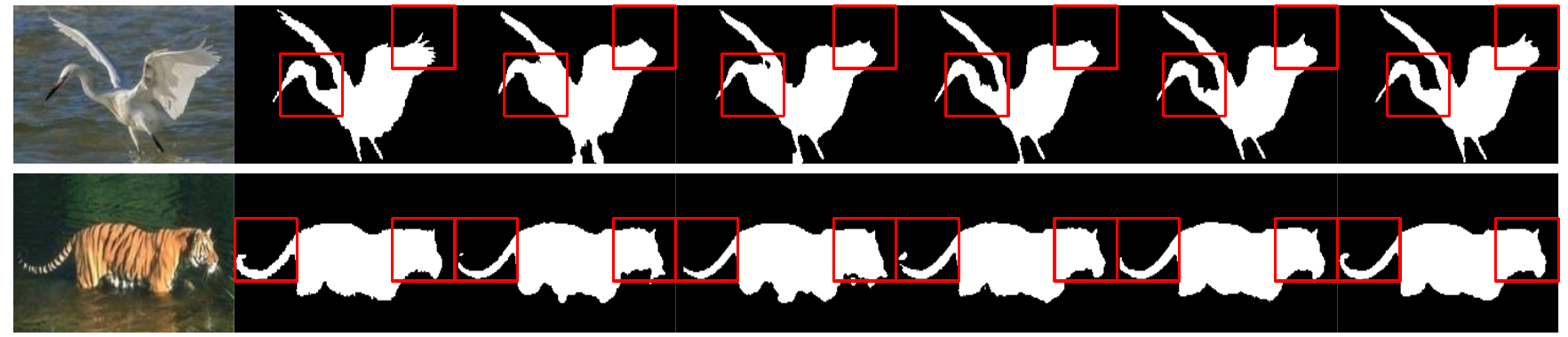}  
		\caption{Comparison results of crane (Above) and tiger (Below) image of ECSSD between the proposed model and UNet, UNet++, DeepLabV3+, and DN-I. The pictures from left to right are: Image; Ground Truth; and the results of  UNet, UNet++, DeepLabV3+, DN-I, and VM\_TUNet, respectively.}  
		\label{crane-tiger}  
	\end{figure}
	
	\paragraph{\textbf{RITE}}
	For RITE, which is a dataset for the segmentation and classification of arteries and veins on the retinal fundus containing 40 images, we resize all images to 256$\times$256, using 20 images for training and 20 images for testing and the partial results of RITE is shown in Figure \ref{left-right}. 
	
	\begin{figure}[ht]	\centering\includegraphics[width=1\linewidth]{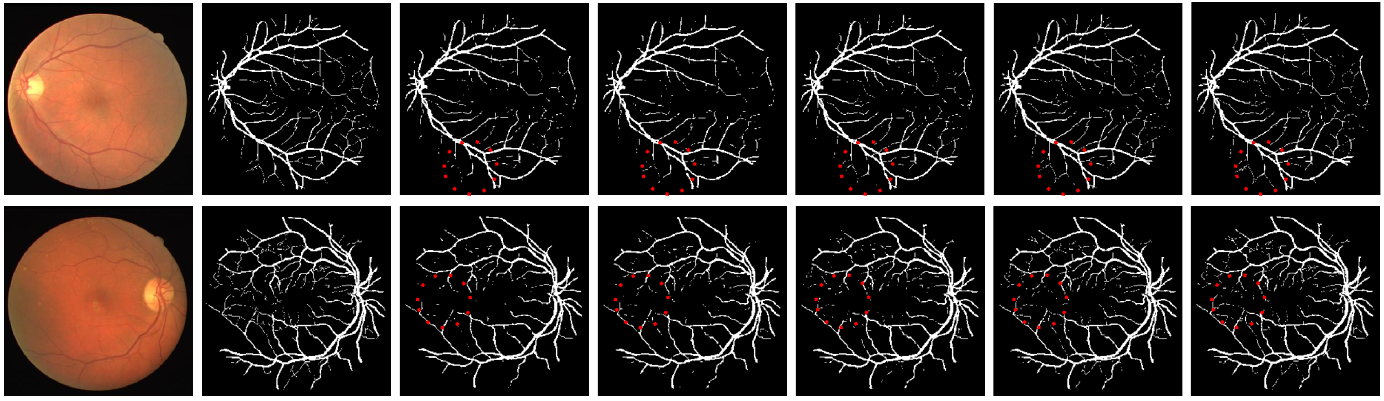}  
		\caption{Comparison results of left (Above) and right (Below) retinal of  RITE between the proposed models: UNet, UNet++, DeepLabV3+, DN-I, and VM\_TUNet. The pictures from left to right are: Image; Ground Truth; and the results of  UNet, UNet++, DeepLabV3+, DN-I, and VM\_TUNet, respectively.}  
		\label{left-right}  
	\end{figure}
	
	\paragraph{\textbf{HKU-IS}}
	HKU-IS is a visual saliency prediction dataset which contains 4447 challenging images, most of which have either low contrast or multiple salient objects. For HKU-IS, we resize all images to the size 256$\times$256, using 3000 images for training and 1447 images for testing. The partial results of HKU-IS is shown in Figure~\ref{HKU1} and Figure~\ref{HKU2}.
	
	\begin{figure}[ht]	\centering\includegraphics[width=1\linewidth]{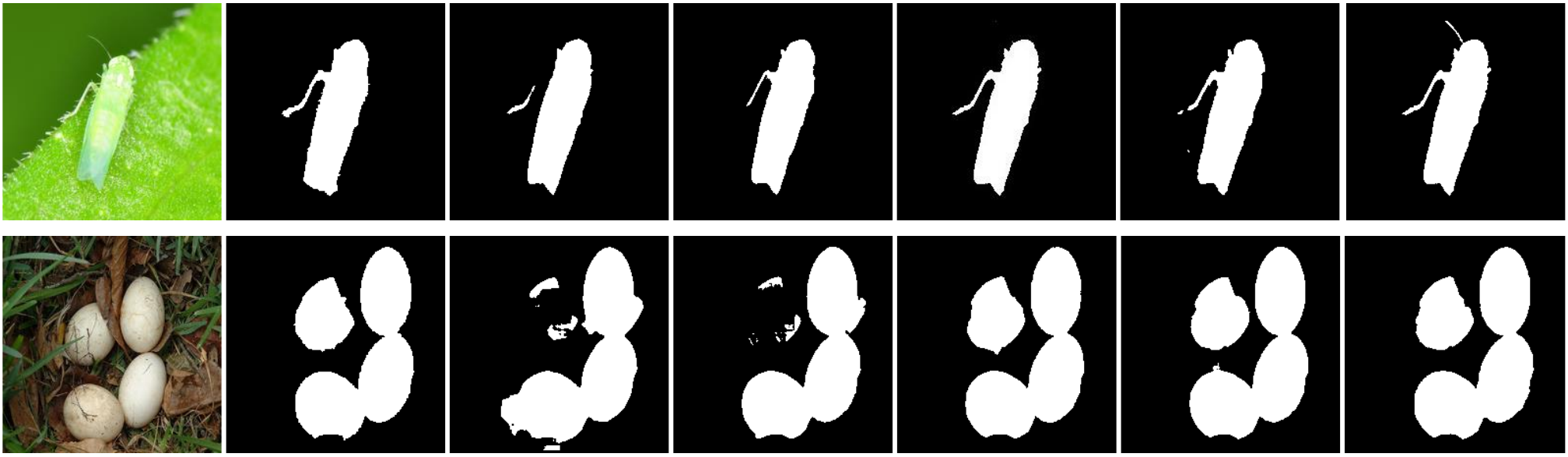}  
		\caption{Comparison results of cicada and eggs (From top to bottom) of HKU-IS between the proposed models: UNet, UNet++, DeepLabV3+, DN-I, and VM\_TUNet. The pictures from left to right are: Image; Ground Truth; and the results of  UNet, UNet++, DeepLabV3+, DN-I, and VM\_TUNet, respectively.}  
		\label{HKU1}  
	\end{figure}
	
	\begin{figure}[ht]	\centering\includegraphics[width=1\linewidth]{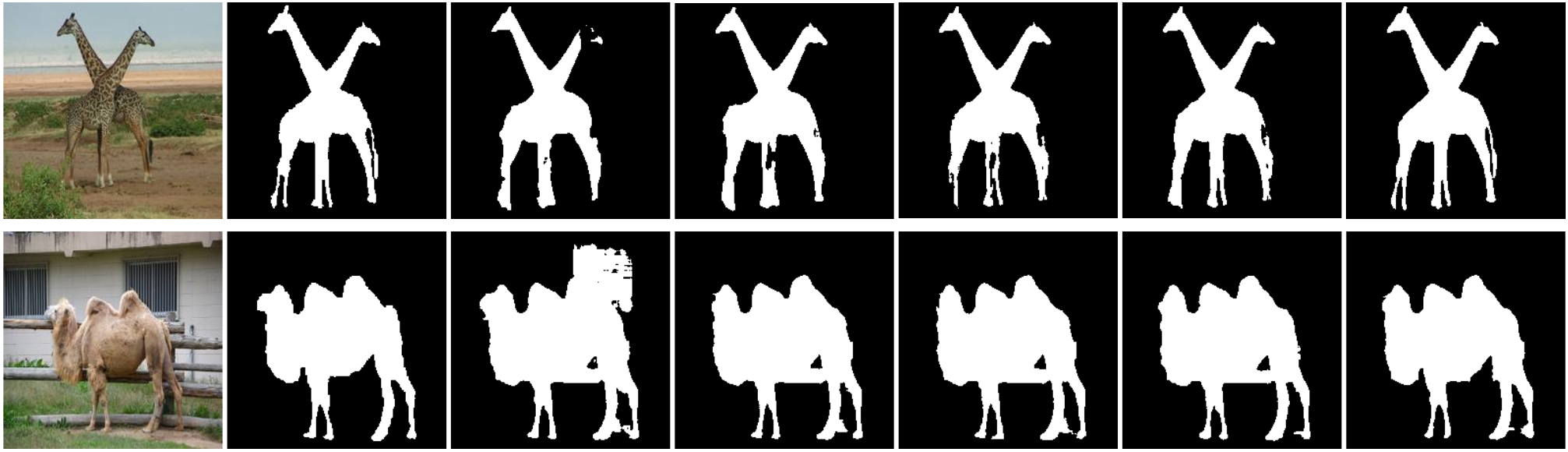}  
		\caption{Comparison results of giraffe and camel (From top to bottom) of HKU-IS between the proposed models: UNet, UNet++, DeepLabV3+, DN-I, and VM\_TUNet. The pictures from left to right are: Image; Ground Truth; and the results of  UNet, UNet++, DeepLabV3+, DN-I, and VM\_TUNet, respectively.}  
		\label{HKU2}  
	\end{figure}
	
	\paragraph{\textbf{DUT-OMRON}}
	For DUT-OMRON,  which
	contains 5168 high-quality natural images where each image contains one or more salient objects with varied and cluttered backgrounds, we resize all images to the size 256$\times$256, using 3500 images for training and 1668 images for testing. The partial results of HKU-IS is shown in Figure~\ref{DUT}. 
	
	\begin{figure}[ht]	\centering\includegraphics[width=1\linewidth]{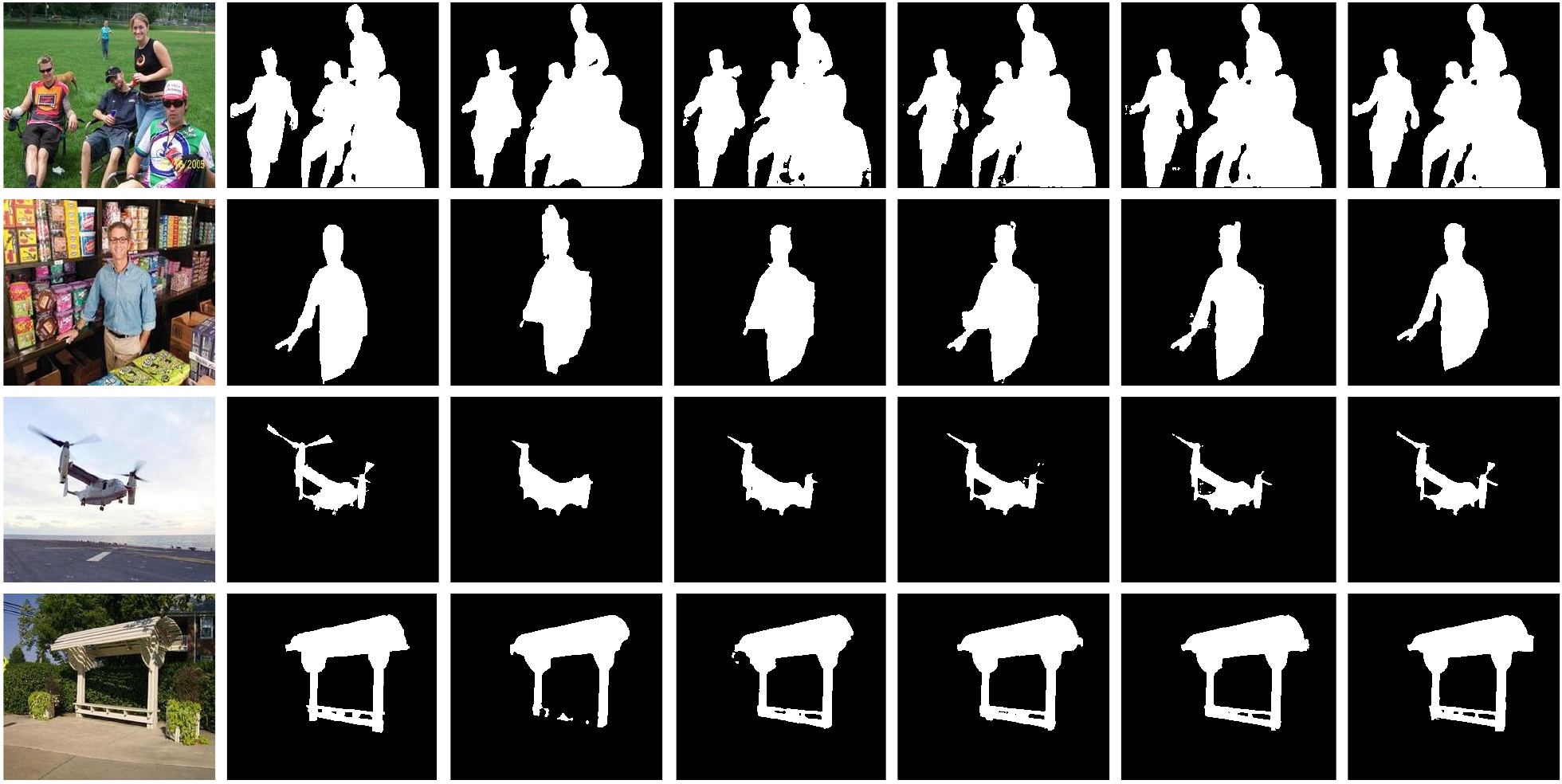}  
		\caption{Comparison results of family, gentleman, plane, and bench (From top to bottom) of DUT-OMRON between the proposed models: UNet, UNet++, DeepLabV3+, DN-I, and VM\_TUNet. The pictures from left to right are: Image; Ground Truth; and the results of  UNet, UNet++, DeepLabV3+, DN-I, and VM\_TUNet, respectively.}  
		\label{DUT}  
	\end{figure}
	
	\subsection{Comparative trial}

	\paragraph{\textbf{Replacement of VM\_TUNet with traditional variational method}}	{\color{blue}
	 Firstly, the term $F\left(f\right)$ remains indispensable in the application of Cahn-Hilliard type equations to the field of image segmentation, even without approximation via UNet. The results of traditional variational methods applied to the original equation can be found, such as in \cite{yang2019image}, as also seen in our paper with similar examples, such as the tiger segmentation we present in Figure~\ref{crane-tiger}. As can be seen from the Figure~\ref{CH}, even when ignoring the influence of the grass in the results of the traditional variational numerical method by disregarding the mask, meaning we do not consider the grass portion in its image result, it is still clearly visible that the segmentation effect of VM\_TUNet on the tiger's tail and neck is significantly better than that of the traditional method.
	\begin{figure}
		\centering\includegraphics[width=1\linewidth]{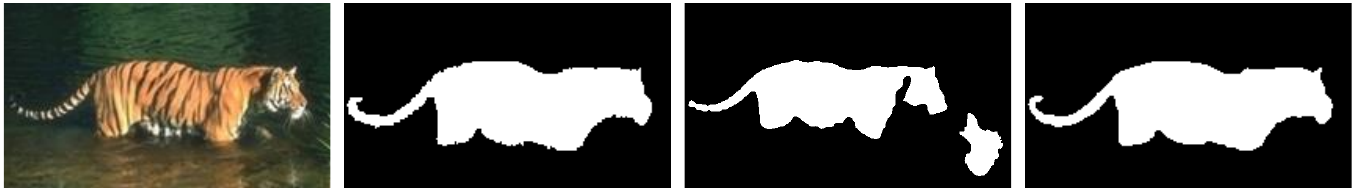}  
		\caption{Comparison results of the tiger of ECSSD between traditional method and VM\_TUNet. The pictures from left to right are: Image; Ground Truth; the results of traditional method and VM\_TUNet respectively.}
		\label{CH}   
	\end{figure}
}
	\paragraph{\textbf{Replacement of UNet with a simple CNN}}\label{CNN}	To examine whether the performance gain of VM\_TUNet is due to the specific architecture of UNet or simply to its parameter scale, we conduct an ablation study by replacing the UNet-based approximation of $F(f)$ with a plain convolutional neural network--FlatCNN. The FlatCNN consists of a deep stack of 2D convolutional layers followed by batch normalizations and ReLU activations, without skip connections, downsampling and upsampling modules. The total number of parameters is controlled to be approximately 30 million, comparable to the UNet used in the main experiments. The architecture follows the following form: $\text{Conv-BN-ReLU}\times N\rightarrow \text{Conv} \rightarrow \text{Sigmoid}$ where $N$ is the number of convolutional blocks, adjusted to match the UNet's parameter budget, whose architecture overview is shown in Table~\ref{table4}.
	
	\begin{table}[ht]
		\caption{Architecture Design of FlatCNN.}
		\label{table4}
		\centering
		\resizebox{0.98\textwidth}{!}{
			\begin{tabular}{ccccc}
				\toprule
				Hierarchy & Layer           & \multicolumn{1}{l}{Number of output channel} & \multicolumn{1}{l}{Convolution kernel size} & \multicolumn{1}{l}{Activation \& Normalization} \\
				\midrule
				1   & Conv2D & 128 & 3$\times$3 & ReLU+BN  \\        
				2-5 & Conv2D$\times$4 & 256 & 3$\times$3 & ReLU+BN \\
				6-10 & Conv2D$\times$5 & 512 & 3$\times$3 & ReLU+BN  \\
				11  & Conv2D & 256 & 3$\times$3 & ReLU+BN     \\
				12  & Conv2D & 128 & 3$\times$3 & ReLU+BN      \\
				13  & Conv2D & 1   & 1$\times$1 & Sigmoid    \\
				\bottomrule
		\end{tabular}}
	\end{table}
	
	At the same time, we also use ResNet50 and DenseNet264 whose number of parameters is similar to UNet to approximate $F(f)$ and we have some selected segmentation results in Figure~\ref{Four} and Table~\ref{table5} under the same experimental conditions where the result shows that UNet performs better than simple CNNs.
	
	\begin{figure}[ht]	\centering\includegraphics[width=1.0\linewidth]{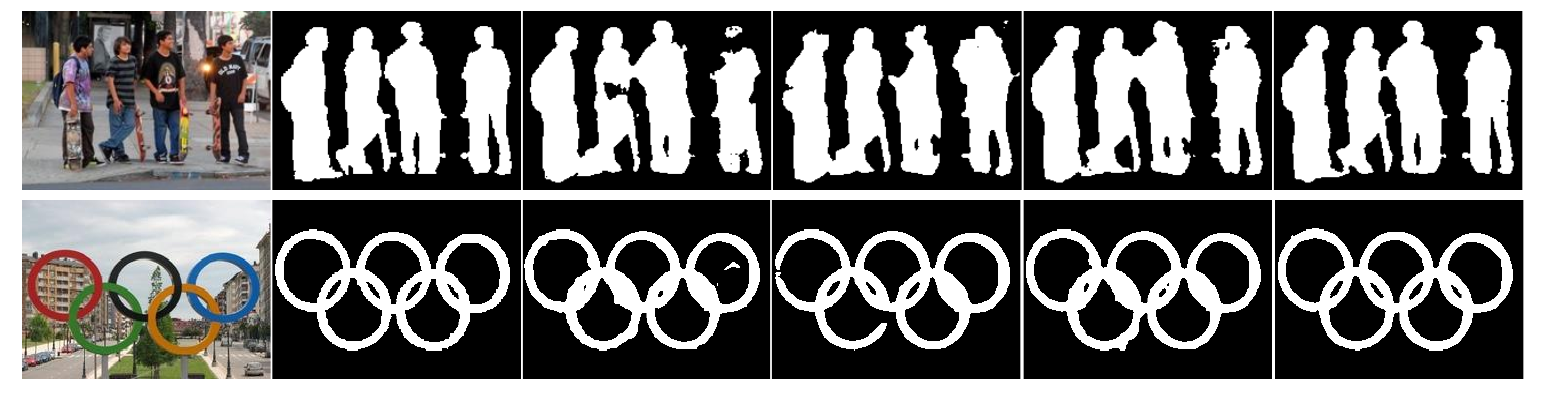} 
		\caption{Comparison results of boys (Above) and Olympic rings (Below) of ECSSD between models trained by ResNet50, DenseNet, FlatCNN and UNet. The pictures from left to right are: Image; Ground Truth; the results trained by ResNet50, DenseNet, FlatCNN and UNet, respectively.} 
		\label{Four}  
	\end{figure}
	
	\begin{table}
		\caption{Comparison of the Accuracy and Dice score of models trained by ResNet50, DenseNet, FlatCNN and UNet, respectively on ECSSD, Extended Complex Scene Saliency Dataset.}
		\label{table5}
		\centering
		\begin{tabular}{llll}
			\toprule
			&\multicolumn{2}{c}{ECSSD}               \\
			\cmidrule(r){2-3}  &   Accuracy   & Dice score  \\
			\midrule
			\multicolumn{1}{c}{ResNet50}& \multicolumn{1}{c}{0.823$\pm$0.003}& \multicolumn{1}{c}{0.801$\pm$0.001}&\\ 
			\multicolumn{1}{c}{DenseNet}& \multicolumn{1}{c}{0.855$\pm$0.001}& \multicolumn{1}{c}{0.831$\pm$0.001}&\\ 	
			\multicolumn{1}{c}{FlatCNN}& \multicolumn{1}{c}{0.888$\pm$0.001}& \multicolumn{1}{c}{0.872$\pm$0.002}& \\ 
			\multicolumn{1}{c}{UNet}& \multicolumn{1}{c}{\textbf{0.949}$\pm$0.002}& \multicolumn{1}{c}{\textbf{0.906}$\pm$0.002}&  \\ 
			\bottomrule
		\end{tabular}
	\end{table}
	
	\paragraph{\textbf{Replacement of TFPM with finite difference method}}	For Eq.~\eqref{v2} and Eq.~\eqref{u2}, we just use simple finite difference method (FDM) without TFPM to get the procedure $u^n\rightarrow v^n\rightarrow u^{n+1}$ where $\Delta v^n$ and $\Delta u^n$ are only approximated by central difference, which are realized by convolution $W_{\Delta}*v^n$ and $W_{\Delta}*u^n$ by Eq.~\eqref{laplace}. The computational schemes are
	\begin{equation}\label{v3}
		\begin{cases} 
			v^n=\varepsilon_{1}W_{\Delta}*u^n-\dfrac{1}{\varepsilon_{2}}\big(4(u^n)^3-6(u^n)^2+2u^n\big),&\text{in}\quad\Omega, \\
			v^n(0,y)=v^n(L_1,y),&0\leq y\leq L_2,\\[0.5em]
			v^n(x,0)=v^n(x,L_2),&{ }0\leq x\leq L_1,\\
			v^0=\varepsilon_{1}W_{\Delta}*u^0-\dfrac{1}{\varepsilon_{2}}\big(4(u_0)^3-6(u_{0})^2+2u_{0}\big),&\text{in}\quad\Omega,
		\end{cases}
	\end{equation}
	and 
	\begin{equation}\label{u3}
		\begin{cases}
			u^{n+1}=u^n-\tau W_{\Delta}*v^n-\tau F(f),&\text{in}\quad\Omega,\\
			u^{n+1}(0,y)=u^{n+1}(L_1,y),&0\leq y\leq L_2,\\
			u^{n+1}(x,0)=u^{n+1}(x,L_2),&0\leq x\leq L_1,\\
			u^0=u_0,&\text{in}\quad\Omega,
		\end{cases}
	\end{equation}
	respectively. Some selected segmentation results are shown in Figure~\ref{fd} under the same experimental conditions where the result obviously shows that TFPM performs much better than FDM.
	
	\begin{figure}[ht]	\centering\includegraphics[width=1.0\linewidth]{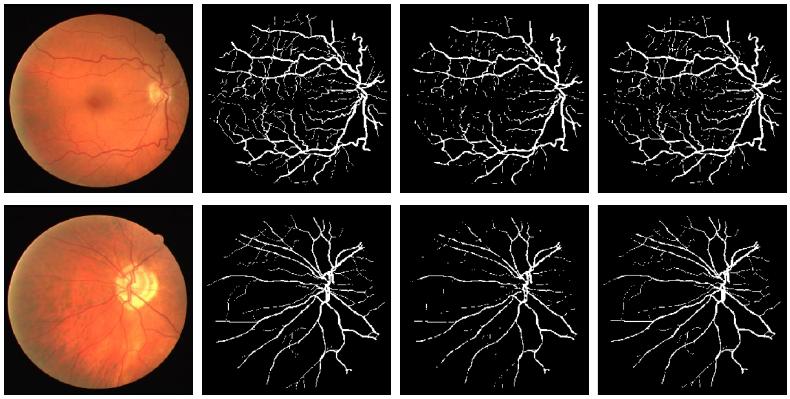} 
		\caption{Comparison results of two examples of RITE between models {\color{blue} applying} FDM and TFPM. The pictures from left to right are: Image; Ground Truth; the results of applying FDM and TFPM, respectively.} 
		\label{fd}  
	\end{figure}
	
	\subsection{Effects of hyperparameters of VM\_TUNet}
	We investigate how the hyperparameters of VM\_TUNet influence performance on the ECSSD dataset. In addition to the neural network’s weight parameters, VM\_TUNet involves several key hyperparameters: the number of blocks $M$, the time step $\tau$, and the coefficients $\varepsilon_1$ and $\varepsilon_2$, while keeping the network architecture fixed as $\boldsymbol{c}=[128,128,128,128,256]$. To provide a clearer comparison, we display results from the 250th to the 450th epoch, plotting the loss, accuracy, and dice score at intervals of 20 epochs.
	
	For the number of blocks, we test $M=1,10,40$ and fix $\tau=0.5,\varepsilon_1=1,\varepsilon_2=1$. The comparisons of the loss, accuracy, and dice score are shown in Figure~\ref{blocknumber}. We can see that VM\_TUNet with $M=10$ provides the best results. It is worth noting that increasing the number of blocks leads to a growth in model parameters, yet it greatly raises the complexity of the network. When $M$ is set too small, VM\_TUNet becomes overly simple and struggles to perform the segmentation task effectively. Conversely, an excessively large $M$ results in an overly complex model, which is more prone to getting trapped in local minima during training.
	
	\begin{figure}[ht]	\centering\includegraphics[width=1\linewidth]{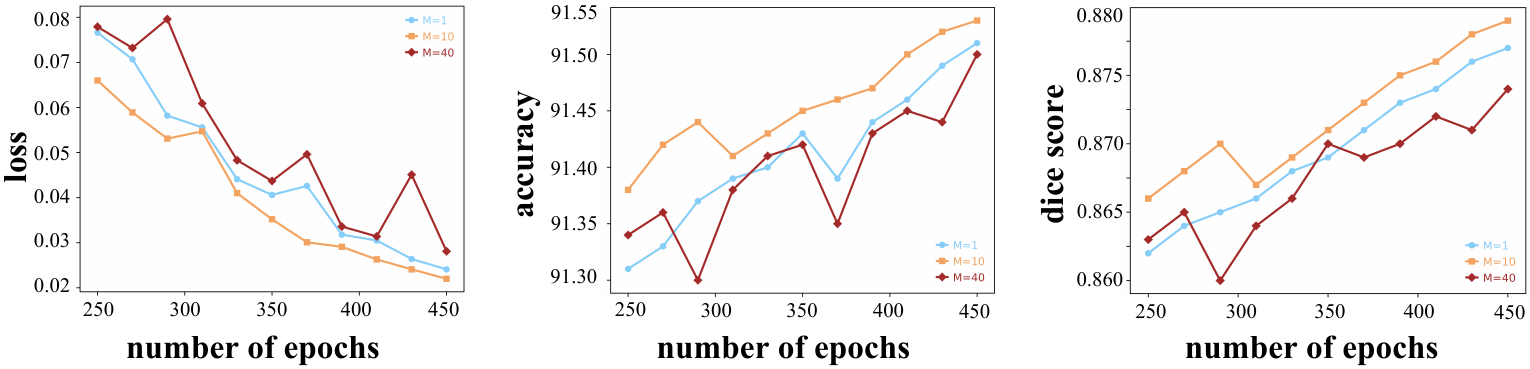}  
		\caption{Effect of the number of blocks $M$ for VM\_TUNet.}  
		\label{blocknumber}  
	\end{figure}
		
	We next fix $M=10,\varepsilon_1=1,\varepsilon_2=1$, and  evaluate $\tau=0.05,0.5,\text{ and }5$. The outcomes are presented in Figure~\ref{steptime}. In our framework, $\tau$ represents the time step: a smaller $\tau$ causes the solution to progress slowly, whereas a larger $\tau$  can lead to instability in the algorithm.
	
	\begin{figure}[ht]	\centering\includegraphics[width=1\linewidth]{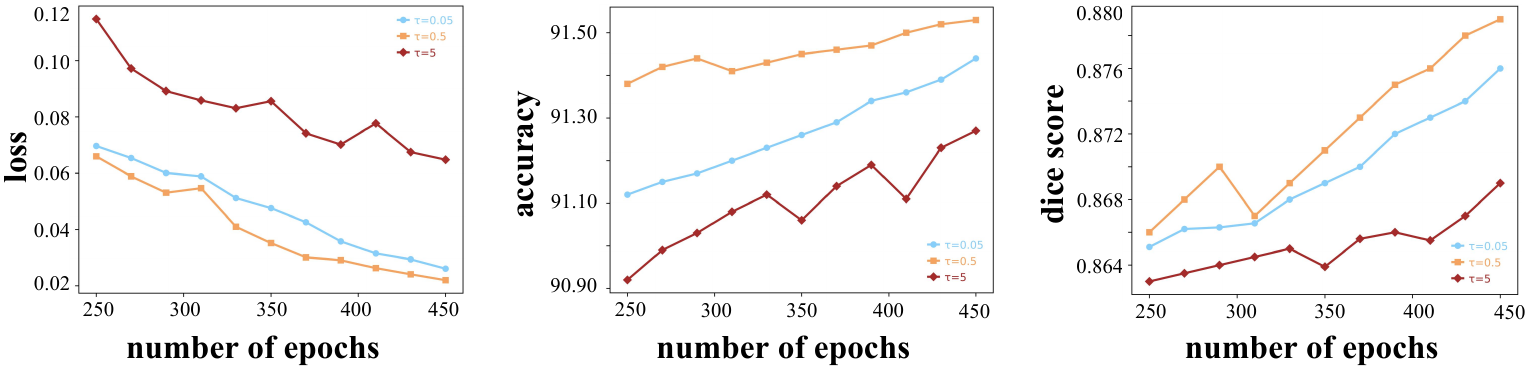}  
	\caption{Effect of the time step $\tau$ for VM\_TUNet.}  
	\label{steptime}  
	\end{figure}
	
	The next two tests are for the hyperparameters $\varepsilon_1$ and $\varepsilon_2$. We fix $M=10$ and $\tau=0.5$, and test $\varepsilon_1=0.1,1,10$ and $\varepsilon_2=0.1,1,10$, respectively. The results are shown in Figure~\ref{epsilon1}~and~\ref{epsilon2}. Hyperparameters $\varepsilon_1$ and $\varepsilon_2$ mainly determine the evolution of $u$. In order to help connect the discontinuous broken parts of the image, it is necessary to select larger values for these two parameters to drive the propagation of $u$ in those areas close to $1$. However, when $\varepsilon_1$ and $\varepsilon_2$ are selected to be large, it will result in a large transition layer, thereby blurring the boundaries of the image that is desired to be segmented. At the same time, based on the characteristics of the Cahn-Hilliard equation, when they are selected too small, although the transition layer can be made thinner, which helps to locate the boundary of target segmentation, the time evolution will be slower.

	\begin{figure}[ht]	\centering\includegraphics[width=1\linewidth]{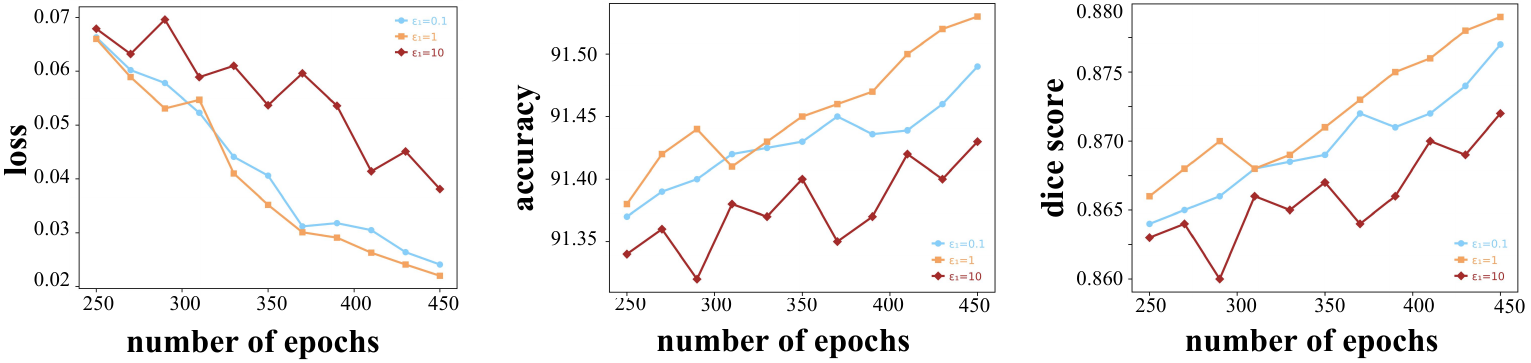}  
	\caption{Effect of the hyperparameter $\varepsilon_{1}$ for VM\_TUNet.}  
	\label{epsilon1}  
	\end{figure}
	
	\begin{figure}[ht]	\centering\includegraphics[width=1\linewidth]{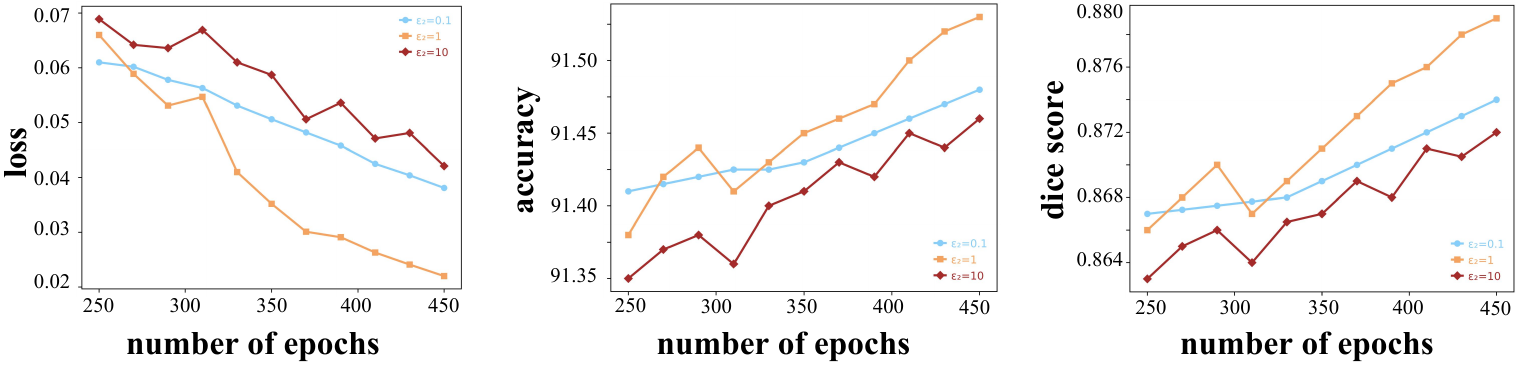}  
	\caption{Effect of the hyperparameter $\varepsilon_{2}$ for VM\_TUNet.}  
	\label{epsilon2}  
	\end{figure}

	\section{Conclusion}\label{Conclusion}
	In this paper, we propose VM\_TUNet, an innovative approach that integrates the strengths of both traditional variational models and modern deep learning methods for image segmentation. By leveraging the fourth-order Cahn-Hilliard equation and combining it with a deep learning framework based on the UNet architecture, we aim to overcome the limitations of conventional segmentation techniques. The key innovation in VM\_TUNet lies in seamlessly incorporating variational principles, which provides interpretability and theoretical rigor, into the flexibility and accuracy of deep learning models. By using a data-driven operator, we eliminate the need for manual parameter tuning, which is a common challenge in traditional variational methods. Next, our use of the TFPM enhances boundary sharpness and structural consistency, which is particularly valuable for semantic segmentation tasks requiring precise delineation. 
	
	Our experimental results demonstrate that VM\_TUNet outperforms existing CNN-like segmentation models like UNet, UNet++, DeepLabV3+, and DN-I, which achieving higher accuracy and dice scores, particularly in challenging segmentation tasks with sharp boundaries. On the other hand, our proposed VM\_TUNet model is primarily designed with an emphasis on lightweight architecture and strong mathematical interpretability, rather than maximizing empirical performance through large-scale models. Due to constraints in computational resources, we focus on models that can be trained and evaluated efficiently without the need for extensive pretraining or large-scale infrastructure. Furthermore, the core contribution of this work lies in the integration of variational PDE-based modeling with data-driven deep learning to enhance boundary preservation and theoretical rigor. As such, our experimental comparisons are centered around representative models within a similar design scope rather than squeezing out engineering performance.
	
	\section*{Acknowledgments}
	The authors are grateful to the reviewers for the constructive comments and
	valuable suggestions which have improved the paper. This research was supported by National Natural Science Foundation of China (NSFC No.12025104).

	\bibliographystyle{abbrv}
	\bibliography{references} 


\end{document}